\documentclass{article}


\usepackage[preprint,nonatbib]{neurips_2023}




\usepackage[utf8]{inputenc} 
\usepackage[T1]{fontenc}    
\usepackage{hyperref}       
\usepackage{url}            
\usepackage{booktabs}       
\usepackage{amsfonts}       
\usepackage{nicefrac}       
\usepackage{microtype}      
\usepackage{xcolor}         
\usepackage{todonotes}
\usepackage{booktabs} 
\usepackage{bm}
\usepackage{xcolor}
\usepackage[most]{tcolorbox}
\usepackage{nicefrac}
\usepackage[export]{adjustbox}
\usepackage{enumitem}
\usepackage{wrapfig}
\usepackage{caption}[small]
\usepackage{amsmath}
\usepackage{amsthm}
\usepackage{graphicx}
\usepackage{subcaption}
\usepackage{graphicx}
\usepackage{caption}
\usepackage{wrapfig}

\usepackage[ruled,linesnumbered,noresetcount,vlined]{algorithm2e}

\newtheorem{problem*}{Problem}

\newtheorem{theorem}{Theorem}

\newtheorem{definition}{Definition}
\newtheorem{proposition}{Proposition}

 \newcommand{\cY}{\mathcal{Y}}
 \newcommand{\cX}{\mathcal{X}}

\DeclareMathOperator*{\argmax}{argmax}

\theoremstyle{plain}

\definecolor{myblue}{RGB}{0, 51, 153}

\newcommand{\hl}[1]{\textcolor{myblue}{\emph{#1}}}
\newcommand{\bhl}[1]{\textbf{\hl{#1}}}

\newcommand{\blue}[1]{\textcolor{myblue}{#1}}
\newcommand{\red}[1]{\textcolor{purple}{#1}}

\title{Data Minimization at Inference Time}

\author{%
  Cuong Tran\\
  Syracuse University\\
  \texttt{cutran@syr.edu}
  \And
  Ferdinando Fioretto\\
  Syracuse University\\
  \texttt{ffiorett@syr.edu}
}

\allowdisplaybreaks\sloppy
\begin{document}
\maketitle

\begin{abstract}
In domains with high stakes such as law, recruitment, and healthcare, learning models frequently rely on sensitive user data for inference, necessitating the complete set of features. This not only poses significant privacy risks for individuals but also demands substantial human effort from organizations to verify information accuracy. This paper asks whether it is necessary to use \emph{all} input features for accurate predictions at inference time. The paper demonstrates that, in a personalized setting, individuals may only need to disclose a small subset of their features without compromising decision-making accuracy. The paper also provides an efficient sequential algorithm to determine the appropriate attributes for each individual to provide. Evaluations across various learning tasks show that individuals can potentially report as little as 10\% of their information while maintaining the same accuracy level as a model that employs the full set of user information.
\end{abstract}

\section{Introduction}
\label{sec:intro}

The remarkable success of learning models also brought with it pressing challenges at the interface of privacy and decision-making. Privacy, in particular, has been cited as one of the most pressing challenges of modern machine learning systems \cite{papernot2016towards}. 
The requirement to protect personally identifiable information is especially important as machine learning systems become increasingly adopted to guide consequential decisions in legal processes, banking, hiring, and healthcare. 

To contrast these challenges, several privacy-enhancing technologies have been proposed in the last decades. 
However, current research on privacy mechanisms, including differential privacy (DP) \cite{Dwork:06}, mainly aims to protect information within training data, potentially leaving user information vulnerable during system deployment. Conventionally, users must disclose their complete set of features for inference, even if not all may be necessary for accurate predictions. This practice not only presents significant privacy risks for users but also burdens companies and organizations with an extensive human effort to verify the accuracy of disclosed information (e.g., auditing in finance operations). Additionally, such an approach may violate the EU General Data Protection Regulation's principle of \emph{data minimization}, which states that personal data should be "adequate, relevant, and limited to what is necessary in relation to the purposes for which they are processed" \cite{rastegarpanah2021auditing,regulation2016regulation}.

This paper challenges this setting and asks whether it is necessary to require \emph{all} input features for a model to produce accurate or nearly accurate predictions during inference. We refer to this question as the \emph{data minimization for inference} problem. 
This unique question bears profound implications for privacy in model personalization,  which often necessitates the disclosure of substantial user data.  
We show that, under a personalized setting, each individual may only need to release a small subset of their features to achieve the \emph{same} prediction errors as those obtained when all features are disclosed. 
Following this result, we also provide an efficient sequential algorithm to identify the minimal set of attributes that each individual should reveal. Evaluations across various learning tasks indicate that individuals may be able to report as little as 10\% of their information while maintaining the same accuracy level as a model using the full set of user information.

\textbf{Contributions.} In summary, the paper makes the following contributions: 
{\bf (1)} it initiates a study to analyze the optimal subset of data features that each individual should disclose at inference time in order to achieve the same accuracy as if all features were disclosed; 
{\bf (2)} it links this analysis to a new concept of \emph{data minimization for inference} in relation to privacy, 
{\bf (3)} it proposes theoretically motivated and efficient algorithms for determining the minimal set of attributes each individual should provide to minimize their data;
and 
{\bf (4)} it conducts a comprehensive evaluation illustrating that individuals may be able to report as little as 10\% of their information to ensure the same level of accuracy of a model that uses the complete set of user information.

To the best of our knowledge, this is the first work studying the connection between data minimization and accuracy at inference time. 
While we are not aware of studies on data minimization for inference problems, we discuss in Appendix \ref{app:related_work} connections with DP, feature selection, and active learning.

\section{Settings and objectives}
\label{sec:settings}

We consider a dataset $D$ consisting of samples $(x, y)$ drawn from an unknown distribution $\Pi$. Here, $x \in \cX$ is a feature vector and $y \in \mathcal{Y} = [L]$ is a label with $L$ classes. The features in $x$ are categorized into \emph{public} $x_P$ and \emph{sensitive} $x_S$ features, with their respective indexes in vector $x$ denoted as $P$ and $S$, respectively. 
We consider classifiers $f_\theta : \mathcal{X} \to \mathcal{Y}$, which are trained on a public dataset from the same data distribution $\Pi$ above. The classifier produces a score 
$\tilde{f}_\theta(x) \in \mathbb{R}^L$ over the classes and a final output class, $f_\theta(x) \in [L]$, given input $x$.
The model's outputs $f_\theta(x)$ and $\tilde{f}_\theta(x)$ are also often referred to as hard and soft predictions, respectively. 

Without loss of generality, we assume that all features in $\mathcal{X}$ lie within the range $[-1, 1]$. In this setting, we are given a trained model $f_\theta$ and, at inference time, we have access to the public features $x_P$. These features might be revealed through user queries or collected by the provider during previous interactions. Our focus is on the setting where $|S| \ll |D|$, and for simplicity, the following considers binary classifiers, where $\cY = 
\{0, 1\}$ and $\tilde{f}_{\theta} \in \mathbb{R}$. Multi-class settings are addressed in Appendix \ref{app:multiclass}.

In this paper, the term \textbf{data leakage} of a model, refers to the percentage of sensitive features that are revealed unnecessarily, meaning that their exclusion would not significantly impact the model's output. 
\hl{Our goal is to design algorithms that accurately predict the output of the model using the smallest possible number of sensitive features}, \emph{thus minimizing the data leakage at inference time}. This objective reflects our desire for privacy.

\begin{wrapfigure}[8]{r}{150pt}
\vspace{-28pt}
\centering
    \includegraphics[width=\linewidth]{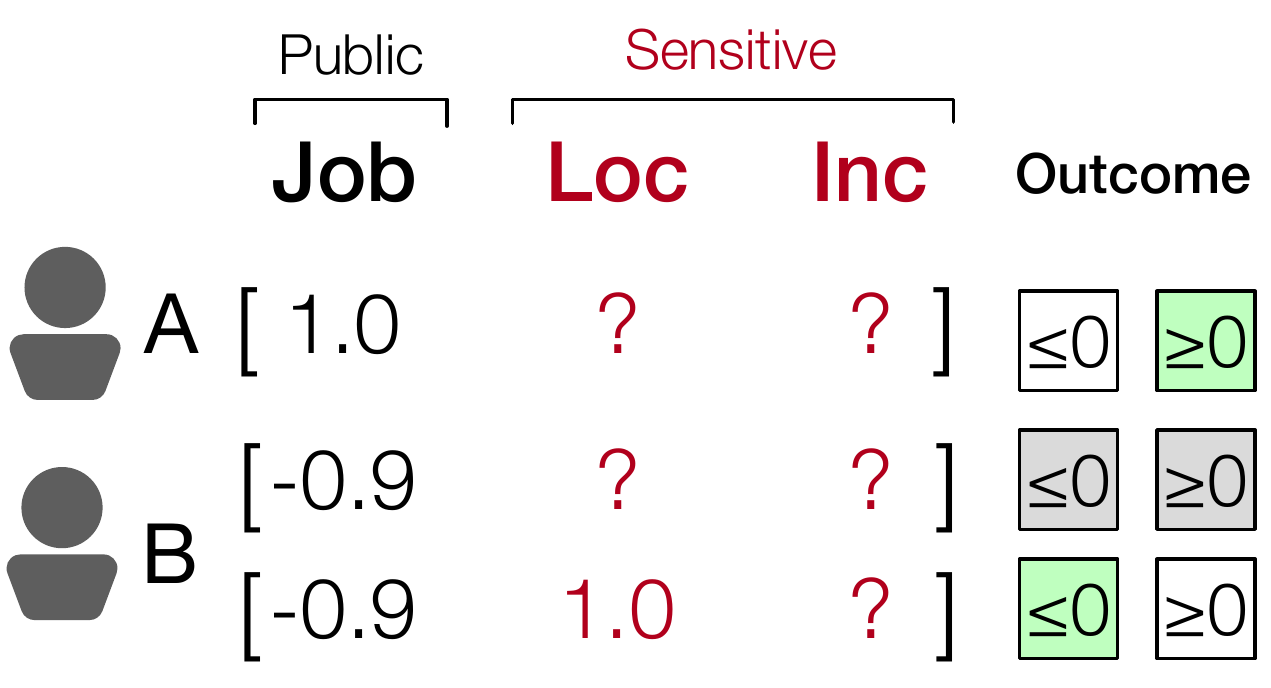} 
    \vspace{-12pt}
    \caption{\small Motivating example.}
    \label{fig:motivation_A}
\end{wrapfigure}

To clarify key points discussed in the paper, let us consider a loan approval task (Figure \ref{fig:motivation_A}) where individual features are represented by the set $\{\textsl{Job}, \textsl{Loc(action)}, \textsl{Inc(ome)}\}$.
In this example, the \textsl{Job} feature $\in x_P$ is public, whereas \textsl{Loc} and \textsl{Inc} $\in x_S$ are sensitive.
We also consider a trained linear model \(f_\theta = 1.0\,\textsl{Job} - 0.5\,\textsl{Loc} + 0.5\,\textsl{Inc} \geq 0\)  and look at a scenario where user (A) has a public feature $\textsl{Job}=1.0$, and user (B) has a public feature $\textsl{Job}=-0.9$.
Both users' sensitive feature values are unknown. However, for user A, the outcome can be conclusively determined without revealing any additional information since all features are bounded within $[-1,1]$.
In contrast, for user B, the outcome cannot be determined solely based on the public feature, but revealing the sensitive feature $\textsl{Loc} = 1.0$ is enough to confirm the classifier outcome.

This example highlights two important observations that motivate our study: {\bf (1)} \hl{not all sensitive attributes may be required for decision-making during inference}, and {\bf (2)}
\hl{different individuals may need to disclose different amounts and types of sensitive information for decision-making}.

\vspace{-4pt}
\section{Core feature sets} 
\label{sec:core_feature_sets}
\vspace{-4pt}

With these considerations in mind, this section introduces the notion of \emph{core feature set}, the \emph{first contribution of the paper}, 
which will be used to quantify data minimization. 
The paper presents the key findings and defers all proofs in Appendix \ref{app:proofs}.

Throughout the paper, the symbols $R$ and $U$ are used to represent the sets of indices for revealed and unrevealed features of the sensitive attribute $S$, respectively. Given a vector $x$ and an index set $I$, we use $x_I$ to denote the vector containing entries indexed by $I$ and $X_I$ to represent the corresponding random variable. Finally, we write $f_{\theta}(X_U, X_R\!=\!x_R)$ as a shorthand for $f_\theta(X_U, X_R\!=\!x_R, X_P\!=\!x_P)$ to denote the prediction made by the model when the features in $U$ are unrevealed. 

\hl{Our objective is to develop algorithms that can identify the smallest subset of sensitive features to disclose}, ensuring that the model's output is accurate (with high probability) irrespective of the values of the undisclosed features. We refer to this subset as the \emph{core feature set}.

\begin{definition}[Core feature set]
\label{def:core_set}
Consider a subset $R$ of sensitive features $S$, and let $U \!=\! S \setminus R$ be the unrevealed features. The set $R$ is a core feature set if, for some $\tilde y \in \cY$,
\begin{equation}\label{eq:cfs}
    \Pr\big( f_\theta(X_{U}, X_{R} = x_R) = \tilde y \big) \geq 1-\delta,
\end{equation}
where $\delta \in [0,1]$ is a failure probability. 
\end{definition}
When $\delta = 0$ the core feature set is called \textbf{pure}. 
Additionally, the label $\tilde y$ satisfying Equation \eqref{eq:cfs} 
is called the \textbf{representative label} for the core feature set $R$. 
The concept of the representative label $\tilde y$ is crucial for the algorithms that will be discussed later. These algorithms use limited information to make predictions and when predictions are made using a set of unrevealed features, the representative label $\tilde{y}$ will be used in place of the model's prediction. 

In identifying core feature sets to minimize data leakage, it's crucial to consider model uncertainty, which refers to the unknown values of unrevealed features. The following result links core feature sets with model entropy, which measures uncertainty and is used by this work to minimize data leakage. 
\begin{proposition}
    \label{thm:delta_vs_entropy}
    Let $R \subseteq S$ be a core feature set with failure probability $\delta <0.5$. 
    Then, there exists a monotonic decreasing function $\epsilon:\mathbb{R}_+\to\mathbb{R}_+$ with $\epsilon(1)=0$ such that: 
  \[
      H\big[ f_\theta( X_U, X_R = x_R ) \big] \leq \epsilon(1-\delta),
  \]
where $H[Z] \text{=} -\sum_{z \in [L]}\Pr(Z=z) \log \Pr(Z=z)$ 
is the entropy of the random variable $Z$. 
\end{proposition}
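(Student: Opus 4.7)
The plan is to reduce this to a standard entropy-maximization argument: bound the entropy of a random variable in terms of how concentrated its mass is on a single outcome.

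First I would let $Z \coloneqq f_\theta(X_U, X_R = x_R)$ and set $p \coloneqq \Pr(Z = \tilde y) \ge 1-\delta$, so that the remaining mass $1-p \le \delta$ is distributed over the other $L-1$ classes (or the single other class in the binary setting). The key observation is that, subject to the constraint $\Pr(Z = \tilde y) = p$, the entropy $H[Z]$ is maximized when the residual mass $1-p$ is spread uniformly over the remaining $L-1$ outcomes. This follows from concavity of entropy together with the symmetry of the objective in the remaining coordinates (or equivalently, from Lagrange multipliers on the constrained maximization). This yields
\[
    H[Z] \;\le\; -p\log p - (1-p)\log\!\tfrac{1-p}{L-1} \;=\; h(1-p) + (1-p)\log(L-1),
\]
where $h(q) = -q\log q - (1-q)\log(1-q)$ is the binary entropy.

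Next I would use the assumption $\delta < 0.5$. Since $1-p \le \delta < 1/2$ and $h$ is monotonically increasing on $[0, 1/2]$, we have $h(1-p) \le h(\delta)$; likewise $(1-p)\log(L-1) \le \delta\log(L-1)$. Putting this together gives
\[
    H[Z] \;\le\; h(\delta) + \delta \log(L-1).
\]
I would then define the function $\epsilon\colon [1/2, 1] \to \mathbb{R}_+$ by $\epsilon(t) \coloneqq h(1-t) + (1-t)\log(L-1)$, so that the above bound reads $H[Z] \le \epsilon(1-\delta)$. Verifying the two required properties is then immediate: $\epsilon(1) = h(0) + 0 = 0$, and as $t$ increases on $[1/2,1]$ both terms $h(1-t)$ and $(1-t)\log(L-1)$ are nonincreasing (the first because $h$ is increasing on $[0,1/2]$ and $1-t$ is decreasing, the second by inspection), so $\epsilon$ is monotonically decreasing.

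The argument is mostly mechanical, and I do not expect a serious obstacle. The one subtle point that must be handled carefully is the role of the hypothesis $\delta < 1/2$: without it, $1-p$ could exceed $1/2$, and the monotonicity step $h(1-p) \le h(\delta)$ would fail since $h$ peaks at $1/2$ and then decreases. This is exactly the regime in which $\tilde y$ ceases to be the plurality outcome, so entropy concentration around a representative label would be meaningless anyway; the assumption in the proposition statement rules this out and keeps the bound clean.
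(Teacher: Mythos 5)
Your proof is correct, and its core is the same as the paper's: the representative label carries probability mass $p \ge 1-\delta > 1/2$, and the entropy, viewed as a function of that mass, is decreasing on $(1/2,1]$ and vanishes at $1$, which is exactly what the monotonicity of $\epsilon$ and $\epsilon(1)=0$ encode. The one substantive difference is scope. The paper's proof is written only for the binary case, where the bound is immediate because the distribution of $f_\theta(X_U,X_R=x_R)$ is fully determined by $p$; the multi-class case is dismissed with ``a similar process.'' You actually carry out that extension: by the standard maximum-entropy step (spreading the residual mass $1-p$ uniformly over the other $L-1$ classes, justified by concavity and symmetry) you obtain the explicit bound $H[Z]\le h(1-p)+(1-p)\log(L-1)$, which specializes to the paper's binary-entropy bound when $L=2$. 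Your remark about why $\delta<1/2$ is needed --- so that $1-p$ stays in the region where the binary entropy $h$ is increasing and $\tilde y$ remains the plurality label --- is exactly the role the hypothesis plays in the paper's derivative computation $\frac{d}{dZ}\bigl(-Z\log Z-(1-Z)\log(1-Z)\bigr)=\log\frac{1-Z}{Z}<0$ for $Z>1/2$. The only cosmetic mismatch is that you define $\epsilon$ on $[1/2,1]$ rather than on all of $\mathbb{R}_+$ as the statement nominally requires, but the paper's own construction has the same restricted domain, so this is not a gap.
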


This property \hl{highlights the relationship between core feature sets and entropy associated with a model using incomplete information}. As the $\delta$ value decreases, the model's predictions become more certain. When $\delta$ equals zero (or when $R$ represents a pure core feature set), the model's predictions can be fully understood without observing $x_{U}$, resulting in entropy of $0$. 

It is worth noticing that enhancing prediction accuracy necessitates revealing additional information, as illustrated by the previous result and the renowned information theoretical proposition below:
\begin{proposition}
    \label{thm:cond_entropy}
    Given two subsets $R$ and $R'$ of sensitive features $S$, with 
    $R \subseteq R'$, 
    \[
      H\big( f_\theta(X_U, X_R=x_R) \big) \geq 
      H\big( f_\theta(X_{U'}, X_{R'}=x_{R'}) \big),
    \]
    where $U=S\setminus R$ and $U'=S\setminus R'$.
\end{proposition}

\begin{wrapfigure}[9]{r}{150pt}
\vspace{-36pt}
\centering
    \includegraphics[width=\linewidth]{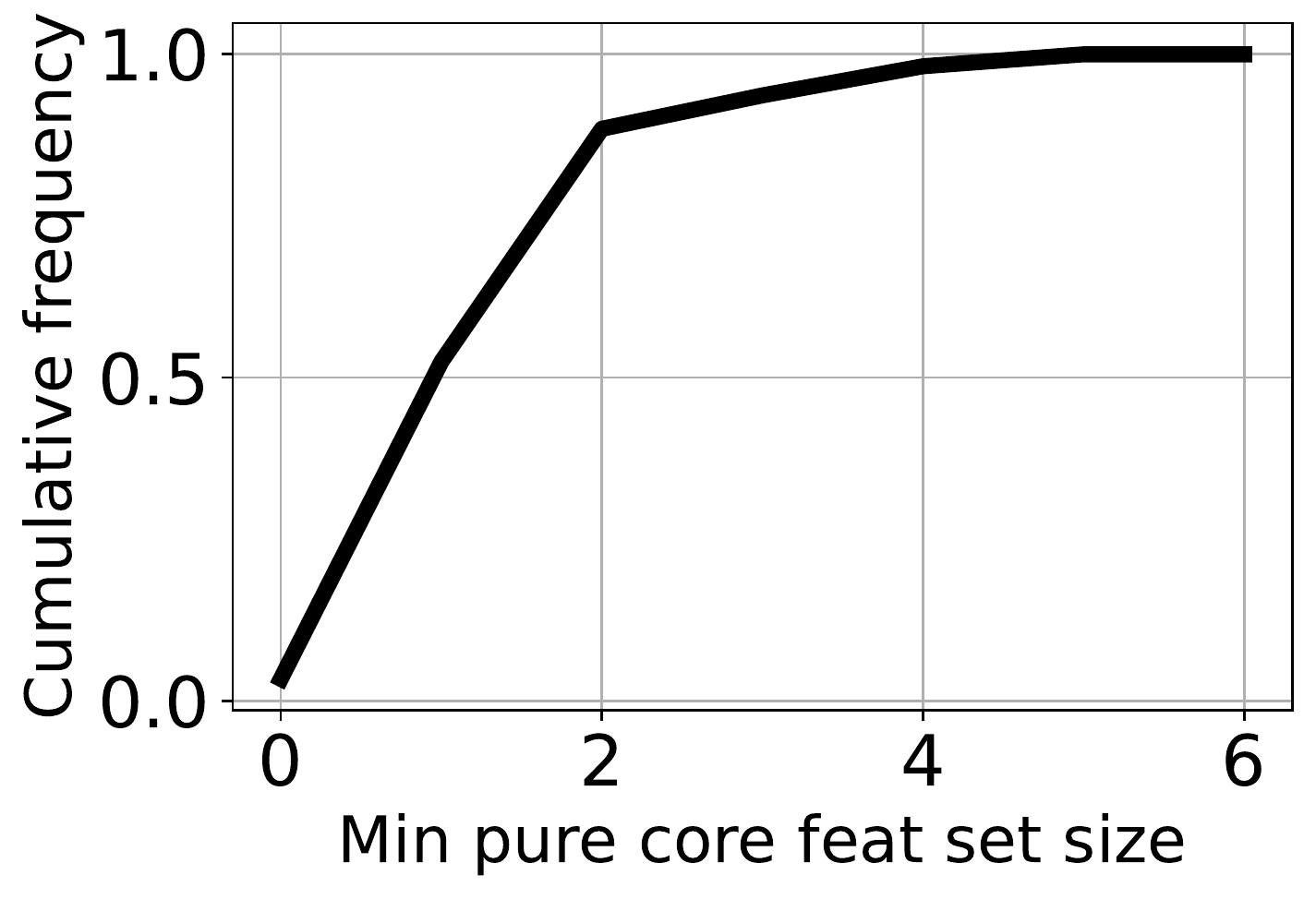} 
    \vspace{-18pt}
    \caption{\small Frequency associated with the size of the \textbf{minimum} pure core feature set.}
    \label{fig:motivation_C}
\end{wrapfigure}
Thus, the parameter $\delta$ plays a crucial role in balancing the trade-off between \emph{privacy loss} and \emph{model performance}. It determines the amount of sensitive information that must be disclosed to make accurate predictions, given a desired level of uncertainty in the model's predictions. As $\delta$ increases, fewer sensitive features need to be revealed, resulting in reduced data leakage but also less precise model predictions, and vice versa.

As highlighted in the previous example, the core feature set is not uniform for all users. This is further exemplified in Figure \ref{fig:motivation_C}, using the Credit dataset \cite{UCIdatasets} with a logistic regression classifier. The figure reports the cumulative count of users against the minimum number of features they need to disclose to ensure confident predictions. It demonstrates that many individuals need to disclose \emph{no} additional information to attain accurate predictions (corresponding to a pure feature of set size $0$), and most individuals can achieve accurate predictions by disclosing only $\leq 2$ sensitive features. These insights, together with the previous observations linking core feature sets to entropy, motivate the proposed online algorithm, the second contribution of the paper.

\section{MinDRel: An algorithm to minimize data release at inference time}
\label{sec:MinDRel}

The goal of the proposed algorithm, called \emph{Minimize Data Reveal} (MinDRel), is to uphold privacy during inference by revealing sensitive features one at a time based on their \emph{released} feature values. This section provides a high-level description of the algorithm and outlines its challenges. 
{Next, Section \ref{sec:MinDRel_linear}, applies MinDRel to 
linear classifiers and discusses its performance on several datasets 
and benchmarks. Further, Section \ref{sec:MinDRel_nonlinear}, extends MinDRel to non-linear classifiers and considers an evaluation over a range of standard datasets.} 

\textbf{Overview of MinDRel.}~
MinDRel operates fundamentally on two critical actions: \bhl{1.}~\hl{determining the next feature to reveal} for a particular user and \bhl{2.}~\hl{verifying whether the disclosed features make up a core feature set} for that user. These two operations will be discussed in sections \ref{sec:computingF} and \ref{sec:testing_CFS}, respectively. 

The algorithm determines which feature to disclose for a specific user by inspecting the posterior probabilities $\Pr(X_j | X_R = x_R, X_P = x_P)$ for each unrevealed feature $j \in U$, taking into account the disclosed sensitive features $x_R$ and public features $x_P$. Given the current set of disclosed features $x_R$ and unrevealed features $x_U$, MinDRel chooses the subsequent feature $j \in U$ as follows:
\begin{align}
  j &= \argmax_{j \in U} F(x_R, x_j; \theta) 
  \label{eq:scoring}
    \doteq \argmax_{j \in U} -H\big[ f_\theta(X_j=x_j, X_{U\setminus \{j\}}, X_R=x_R)\big],
\end{align}
where $F$ is a \emph{scoring function} that evaluates the amount of information that can be acquired about the model's predictions when feature $X_j$ is disclosed. As suggested in previous sections, it's desirable to reveal the feature that provides the most insight into the model prediction upon disclosure. MinDRel uses \emph{Shannon entropy} for this purpose as it offers a natural method for quantifying information. Once feature $X_j$ is disclosed with a value of $x_j$, the algorithm updates the posterior probabilities for all remaining unrevealed features. The process concludes either when all sensitive features have been disclosed or when a core feature set has been identified. It should also be noted that, within this framework, there is no need to perform data imputation when some features are missing. Unrevealed features are treated as random variables and are integrated during the prediction process.

Both the computation of the scoring function $F$ and the verification of whether a set of disclosed features constitute a core feature set present two significant challenges for the algorithm. The rest of the section delves into these difficulties.

\subsection{Computing the scoring function $F$}
\label{sec:computingF}
Designing a scoring function $F$ that measures how confident a model's prediction is when a user discloses an additional feature $X_j$ brings up two key challenges. {\bf First}, \emph{the value of $X_j$ is unknown until the decision to reveal it is made}, which complicates the computation of the entropy function. {\bf Second}, even if the value of $X_j$ were known, \emph{determining the entropy of model predictions in an efficient manner} is another difficulty.
We next discuss how to overcome these challenges.

\textbf{Dealing with unknown values.}
To address the first challenge, we exploit the information encoded in the disclosed features to infer $X_j$'s value and compute the posterior probability $\Pr(X_j | X_R \!=\! x_R)$ of the unrevealed feature $X_j$ given the values of the revealed ones. The scoring function, abbreviated as $F(X_j)$, can thus be modeled as the expected negative entropy given the randomness of $X_j$,
\begin{align}
\label{eq:exp_entropy}
    F(X_j) &= 
    \mathbb{E}_{X_j} - \big[H[f_\theta(X_j, X_{U\setminus \{j\}}, X_R \!=\! x_R)\big] \notag\\
    &= - \int \underbrace{
    H\big[\red{f_\theta (X_j\!=\!z, X_{U\setminus \{j\}}, X_R\!=\!x_R)}\big]
    }_{\red{A}}
    \times 
    \underbrace{\blue{\Pr(X_j \!=\! z | X_R \!=\! x_R)}}_{\blue{B}} dz,
\end{align}
where $z \in {\cal X}_j$ is a value in the support of $X_j$. 

\textbf{Efficient entropy computation.} 
The second difficulty relates to how to estimate this scoring function efficiently. Indeed this is challenged by two key components. The first (\red{A}) is the entropy of the model's prediction given a specific unrevealed feature value, $X_j = z$. This prediction is a function of the random variable $X_{U \setminus \{j\}}$, and, due to Proposition \ref{thm:delta_vs_entropy}, its estimation is linked to the conditional densities \red{$\Pr(X_{U \setminus \{j\}} \vert X_R = x_R, X_j = z)$}. The second (\blue{B}) is the conditional probability \blue{$Pr(X_j = z| X_R = x_R)$}. Efficient computation of these conditional densities is discussed next.

First, we discuss a result relying on the joint Gaussian assumption of the input features. This result will be useful in providing a computationally efficient method to estimate such conditional density functions. In the following, $\Sigma_{IJ}$ represents a sub-matrix of size $|I| \times |J|$ of a matrix $\Sigma$ formed by selecting rows indexed by $I$ and columns indexed by $J$. 

\begin{proposition}
\label{prop:2} The conditional distribution of any subset of unrevealed features 
$U' \subseteq U$, given the the values of released features $X_R =x_R$ is given by:
\begin{align*}
\blue{\Pr(X_{U'} | X_R = x_R)}  &= 
    \mathcal{N}\bigg(\mu_{U'}  + \Sigma_{U' R} \Sigma^{-1}_{R R} 
        (x_R - \mu_R), 
         \Sigma_{U' U'} - \Sigma_{U' R}\Sigma^{-1}_{R R} \Sigma_{R U'} \bigg),
\end{align*}
where $\Sigma$ is the covariance matrix.
\end{proposition}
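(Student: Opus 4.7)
The proposition is the classical conditional distribution formula for a partitioned multivariate Gaussian, so my plan is to reduce it to that textbook identity by invoking the joint Gaussian assumption on the feature vector and then exploiting the \emph{decorrelation-implies-independence} property that is special to Gaussians. Concretely, under the joint Gaussian assumption, $(X_{U'}, X_R)$ is itself jointly Gaussian with mean $(\mu_{U'}, \mu_R)$ and block covariance
\[
    \Sigma_{(U',R)} = \begin{pmatrix} \Sigma_{U'U'} & \Sigma_{U'R} \\ \Sigma_{RU'} & \Sigma_{RR} \end{pmatrix},
\]
which follows because any subvector of a Gaussian random vector is Gaussian with mean and covariance obtained by selecting the relevant entries and sub-blocks.

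Next, I would introduce the ``Gaussian residual''
\[
    Y \doteq X_{U'} - \mu_{U'} - \Sigma_{U'R}\Sigma_{RR}^{-1}(X_R - \mu_R),
\]
which is a linear (affine) transformation of $(X_{U'}, X_R)$ and hence jointly Gaussian with $X_R$. A direct covariance computation yields
\[
    \mathrm{Cov}(Y, X_R) = \Sigma_{U'R} - \Sigma_{U'R}\Sigma_{RR}^{-1}\Sigma_{RR} = 0,
\]
and $\mathbb{E}[Y] = 0$ by construction. Because zero covariance implies independence in the Gaussian setting, $Y$ is independent of $X_R$, so the conditional law of $Y$ given $X_R = x_R$ coincides with its marginal law. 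Computing the marginal variance of $Y$ gives the Schur complement
\[
    \mathrm{Var}(Y) = \Sigma_{U'U'} - 2\Sigma_{U'R}\Sigma_{RR}^{-1}\Sigma_{RU'} + \Sigma_{U'R}\Sigma_{RR}^{-1}\Sigma_{RR}\Sigma_{RR}^{-1}\Sigma_{RU'} = \Sigma_{U'U'} - \Sigma_{U'R}\Sigma_{RR}^{-1}\Sigma_{RU'}.
\]

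Finally, I would rearrange $X_{U'} = Y + \mu_{U'} + \Sigma_{U'R}\Sigma_{RR}^{-1}(X_R - \mu_R)$ and condition on $X_R = x_R$. The second and third terms become deterministic constants, so the conditional distribution is Gaussian with mean $\mu_{U'} + \Sigma_{U'R}\Sigma_{RR}^{-1}(x_R - \mu_R)$ and covariance equal to $\mathrm{Var}(Y)$, which is exactly the statement of the proposition. I do not anticipate a serious obstacle here since the argument is entirely linear-algebraic once joint Gaussianity is granted; the only subtlety worth flagging is the tacit assumption that $\Sigma_{RR}$ is invertible, which can be ensured either by a mild regularization of the empirical covariance or by replacing $\Sigma_{RR}^{-1}$ with its pseudo-inverse, and that in practice $\Sigma$ and $\mu$ are estimated from the public training set so the formula is applied with plug-in estimates.
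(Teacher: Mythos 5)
Your proof is correct. For the record, the paper does not actually prove this proposition: its appendix simply states that it is a well-known property of the Gaussian distribution and points the reader to Chapter 2.3.2 of Bishop's \emph{Pattern Recognition and Machine Learning}. So where the paper offers only a citation, you supply a genuine self-contained derivation. Your route --- forming the residual $Y = X_{U'} - \mu_{U'} - \Sigma_{U'R}\Sigma_{RR}^{-1}(X_R-\mu_R)$, checking $\mathrm{Cov}(Y,X_R)=0$, and invoking the fact that uncorrelated jointly Gaussian vectors are independent --- is the cleaner of the two standard arguments; the textbook the paper cites instead completes the square in the exponent of the joint density and reads off the conditional mean and covariance. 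Your variance computation collapsing to the Schur complement $\Sigma_{U'U'}-\Sigma_{U'R}\Sigma_{RR}^{-1}\Sigma_{RU'}$ checks out, and you correctly note the one step that requires joint Gaussianity (zero covariance does not imply independence in general) as well as the two tacit hypotheses the paper never states explicitly in the proposition itself: that the full feature vector is assumed jointly Gaussian (this is only mentioned in the surrounding prose) and that $\Sigma_{RR}$ is invertible, for which your suggestion of regularization or a pseudo-inverse is the right practical fix given that $\Sigma$ is a plug-in empirical estimate.
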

Note that Equation \eqref{eq:exp_entropy} considers $U'\!=\! \{j\}$, and thus, component (\blue{B}) can be computed efficiently exploiting the result above. 
To complete Equation \eqref{eq:exp_entropy}, we need to estimate the entropy $H[\red{f_\theta (X_j = z, X_{U\setminus {j}}, X_R\!=\!x_R)}]$ (component \red{A}) for a specific instance $z$ drawn from $\blue{\Pr(X_j | X_R \!=\! x_R)}$. This poses a challenge due to the non-linearity of the hard model predictions $f_\theta$ adopted. To tackle this computational challenge, we first estimate component A using soft labels $\tilde{f}_\theta$ and then apply a thresholding operator. More specifically, we first estimate $\Pr(\tilde{f}_{\theta}(X_j \!=\! z,$ $X_{U\setminus\{j\}}, X_R \!=\! x_R))$ and, based on this distribution, we subsequentially estimate $f_\theta$ as $\bm{1} \{\tilde{f}_\theta \geq 0\}$, where $\bm{1}$ is the indicator function. In the following sections, we will show how to assess this estimate for linear and non-linear classifiers. Finally, by approximating the distribution over soft model predictions through Monte Carlo sampling, the score function in $F(X_j)$ can be computed as
\begin{align}
     F(X_j) 
     \approx -  \nicefrac{1}{| {\bm Z}|} \!\sum_{z' \in {\bm Z}}\!
     H\left[ \red{f_\theta (X_j\!=z\!', X_{U\setminus\{j\}}, X_R \!=x\!_R)} \right],\label{eq:monte_carlo} 
\end{align}
where $\bm Z$ is a set of random samples drawn from \blue{$\Pr(X_j | X_R = x_R)$} and estimated through Proposition \ref{prop:2}, which thus can be computed efficiently. 

When the Gaussian assumption does not hold, one can recur to (slower) Bayesian approaches to estimate the uncertainty of unrevealed features  $X_U$ given the set of revealed features $X_R =x_R$. 
A common approach involves treating $X_U$ as the target variable and employing a neural network to establish the mapping $X_U = g_{w}(X_R)$. Utilizing Bayesian techniques \cite{krishnan2022bayesiantorch,NEURIPS2020_d3d94468}, the posterior of the network's parameter $p(w|D) = p(w) p(D | w)$ can be computed initially. Based on the posterior distribution of the model's parameters $w$, the posterior of unrevealed features can be calculated as $\Pr(X_U = x_U | X_R = x_R) = \Pr_{w \sim p( w | D)}(g_{w}(X_R) = x_U)$.
However, implementing such a Bayesian network not only significantly increases training time but also inference time. Since it is necessary to compute $\Pr(X_U |X_R = x_R)$ for all possible choices of $U \in S$, the number of Bayesian neural network regressors scales exponentially with $|S|$.

\begin{wrapfigure}[8]{r}{0.5\textwidth}
\vspace{-24pt}
\centering\includegraphics[width=\linewidth]{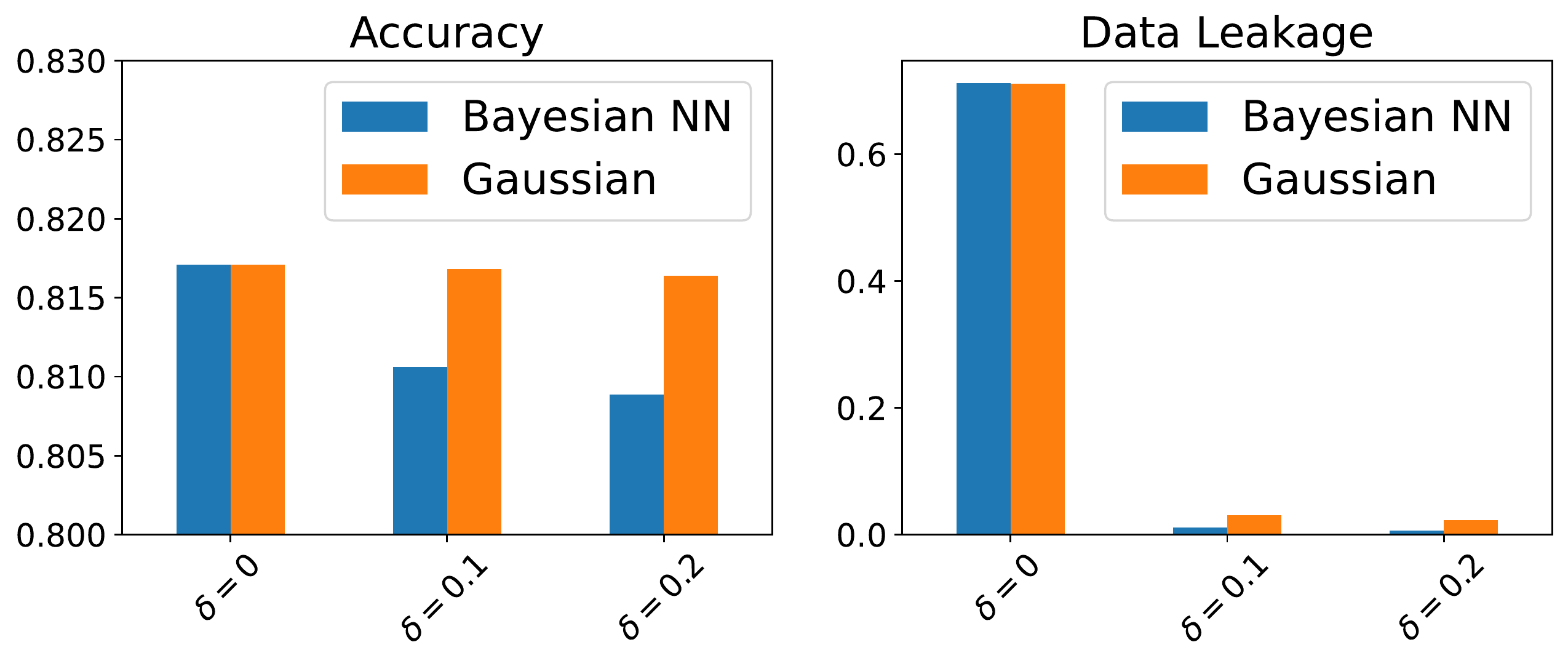}
\vspace{-18pt}
\caption{\small Bayesian NN vs Gaussian.}
\label{fig:compare_linear_gauss_bayes}
\end{wrapfigure}
Importantly, in our evaluation, the data minimization method that operates under the Gaussian assumption maintains similar decision-making and produces comparable outcomes to the Bayesian approach, even in cases where the Gaussian assumption is not applicable in practical settings. Figure 3 illustrates this comparison, showcasing the performance of the proposed mechanism on a real dataset (Credit dataset with $|S|=5$) concerning accuracy (higher is better) and data leakage (lower is better) across various failure probability $\delta$ values. Notice how similar is the performance of the mechanisms that either leverage the Gaussian assumption or operate without it (Bayesian NN). Importantly, the assumption of a Gaussian distribution is not overly restrictive or uncommon. In fact, it is a cornerstone in many areas of machine learning, including Gaussian Processes \cite{williams2006gaussian}, Bayesian optimization \cite{shahriari2015taking}, and Gaussian Graphical models \cite{peterson2015bayesian}. 

\subsection{Testing a core feature set}
\label{sec:testing_CFS}
The proposed iterative algorithm terminates once it determines whether a subset $R$ of the sensitive feature set $S$ constitutes a core feature set. This validation process falls into two scenarios:

\begin{enumerate}[leftmargin=*,itemsep=0pt,topsep=-4pt,partopsep=0pt]
\item {When $\delta\!=\!0$}: To confirm that $R$ is a pure core feature set, it is sufficient to verify that {$f_{\theta}(X_U, X_R=x_R)$} remains constant for all possible realizations of $X_U$. As we will show in Section \ref{sec:MinDRel_linear}, linear classifiers can perform this check in linear time without making any specific assumptions about the input distribution.

\item {When $\delta\!>\!0$}:  In this case, the property above is no longer valid. As per Definition \ref{def:core_set}, to confirm a core feature set, it is essential to estimate the distribution of $\Pr(\tilde{f}_{\theta}(X_U, X_R =x_R))$. In Section \ref{sec:MinDRel_linear}, we demonstrate how to analytically estimate this distribution for linear classifiers. Furthermore, in Section \ref{sec:MinDRel_nonlinear}, we illustrate how to locally approximate this distribution for nonlinear classifiers and derive a simple yet effective estimator that can be readily implemented in practice.
\end{enumerate}

\section{MinDRel for linear classifiers}
\label{sec:MinDRel_linear}

This section will  devote to estimating the distribution 
$\Pr(\red{\tilde{f}_{\theta}(X_j \!=\! z, X_{U\setminus\{j\}}, X_R \!=\! x_R)})$, 
simply expressed as $\Pr(\red{\tilde{f}_{\theta}(X_U, X_R \!=\! x_R)})$ and provides an instantiation of MinDRel for linear classifiers. 
In particular, it shows that 
both the estimation of the conditional distributions required to compute 
the scoring function $F(X_j)$ and the termination condition to test whether 
a set of revealed features is a core feature set, can be computed efficiently. This is an important property for the developed algorithms, which are designed to be online and interactive.

\vspace{-4pt}
\subsection{Efficiently Estimating $\Pr(\red{\tilde{f}_{\theta}(X_{U}, X_R = x_R)})$}
\vspace{-4pt}

For a linear classifier $\tilde{f}_{\theta} = \theta^\top x$, notice that when the input features are jointly Gaussian, the model predictions $\tilde{f}_{\theta}(x)$ are also Gaussian, as highlighted by the following result.
\begin{proposition}
\label{prop:3}
The model soft prediction, $\tilde{f}_{\theta}(X_U, X_R = x_R) =\theta_U X_U + \theta_R x_R$ 
is a random variable following a Gaussian distribution $\mathcal{N}\big(m_f, {\sigma_f^2} \big)$, 
with
\begin{align}
    m_f &= \theta_R x_R + \theta^\top_U \big(  \mu_U + \Sigma_{U R} \Sigma^{-1}_{RR} ( x_{R} - \mu_R) \big) \label{eq:prop4_a}\\ 
    \sigma^2_f &= \theta^\top_{U} \big( \Sigma_{U U} - \Sigma_{U R} \Sigma^{-1}_{R R} \Sigma_{R U} \big) \theta_{U}, \label{eq:prop4_b}
\end{align}
where $\theta_U$ is the sub-vector of parameters $\theta$ corresponding to the unrevealed features $U$. 
 \end{proposition}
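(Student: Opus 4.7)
The plan is to derive this as an immediate corollary of Proposition~\ref{prop:2} combined with the well-known fact that affine transformations of a Gaussian random vector are themselves Gaussian. Since $\tilde{f}_\theta(X_U, X_R = x_R) = \theta_U^\top X_U + \theta_R^\top x_R$ is an affine function of the random vector $X_U$ (with $\theta_R^\top x_R$ being a deterministic constant once $x_R$ is revealed), all we need is to characterize the conditional law of $X_U$ given $X_R = x_R$ and then push it through this affine map.

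First, I would invoke Proposition~\ref{prop:2} (specializing it to $U' = U$) to conclude that, under the joint Gaussian assumption on the input features, the conditional distribution $\Pr(X_U \mid X_R = x_R)$ is Gaussian with mean $\mu_U + \Sigma_{UR} \Sigma_{RR}^{-1}(x_R - \mu_R)$ and covariance $\Sigma_{UU} - \Sigma_{UR}\Sigma_{RR}^{-1}\Sigma_{RU}$. Next, I would apply the standard result that if $Z \sim \mathcal{N}(m, C)$ in $\mathbb{R}^{|U|}$ and $a \in \mathbb{R}^{|U|}$, $b \in \mathbb{R}$, then $a^\top Z + b \sim \mathcal{N}(a^\top m + b, a^\top C a)$. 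Taking $Z = X_U$, $a = \theta_U$, and $b = \theta_R^\top x_R$ yields directly that $\tilde{f}_\theta(X_U, X_R = x_R)$ is Gaussian.

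Finally, I would read off the mean and variance. The mean is
\[
  m_f = \theta_U^\top \bigl( \mu_U + \Sigma_{UR}\Sigma_{RR}^{-1}(x_R - \mu_R) \bigr) + \theta_R^\top x_R,
\]
matching Equation~\eqref{eq:prop4_a}, and the variance is
\[
  \sigma_f^2 = \theta_U^\top \bigl( \Sigma_{UU} - \Sigma_{UR}\Sigma_{RR}^{-1}\Sigma_{RU} \bigr) \theta_U,
\]
matching Equation~\eqref{eq:prop4_b}. The constant term $\theta_R^\top x_R$ does not contribute to the variance.

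There is no real obstacle here: the entire content of the proposition is bookkeeping on top of Proposition~\ref{prop:2}. The only mild subtlety worth pointing out explicitly is that closure of Gaussianity under affine maps requires the map to act on the random vector itself (not to combine it with other random quantities), which is exactly why we condition on $X_R = x_R$ first so that $\theta_R^\top x_R$ becomes a constant before the affine combination is formed. I would also briefly note that the result implicitly assumes $\Sigma_{RR}$ is invertible (otherwise one can replace the inverse by a pseudoinverse, as in the standard formulation of conditional Gaussians), so that the conditional mean and covariance in Proposition~\ref{prop:2} are well-defined.
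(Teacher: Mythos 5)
Your proposal is correct and follows the same route as the paper's own proof, which likewise cites Proposition~\ref{prop:2} for the conditional law of $X_U$ and then appeals to closure of Gaussians under linear (affine) combinations; you simply spell out the mean/variance bookkeeping and the invertibility caveat that the paper leaves implicit.
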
 

The result above is used to assist in calculating the conditional 
distribution of the model hard predictions $f_\theta(x)$, following thresholding. 
This is a random variable that adheres to a Bernoulli distribution, as shown 
next, and will be used to compute the entropy of the model predictions, 
as well as to determine if a subset of features constitutes a core set.
 
\begin{proposition}
\label{prop:4}
Let the soft model predictions $\tilde{f}_\theta(X_U, X_R =x_R)$ 
be a random variable following a Gaussian distribution $\mathcal{N} (m_f, 
\sigma^2_f)$. Then, the model prediction following thresholding $f_{\theta}( X_U, X_R =x_R)$ is a random variable following a Bernoulli distribution $\mathcal{B}(p)$ 
with $p = \Phi( \frac{m_f}{\sigma_f})$, where $\Phi(\cdot)$ is the  CDF of the standard Normal distribution, and $m_f$ and $\sigma_f$,
are given in Eqs~\eqref{eq:prop4_a} and \eqref{eq:prop4_b}, respectively.
\end{proposition}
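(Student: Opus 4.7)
The plan is to proceed in two short steps: first argue that $f_\theta(X_U, X_R=x_R)$ is necessarily Bernoulli because it is a $\{0,1\}$-valued random variable, then compute its success parameter $p = \Pr(f_\theta = 1)$ by reducing it to a standard normal tail probability using the hypothesis on $\tilde{f}_\theta$.

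For the first step, recall that the hard prediction is defined as $f_\theta(x) = \bm{1}\{\tilde{f}_\theta(x) \geq 0\}$ (this is stated in Section~\ref{sec:MinDRel}, following equation \eqref{eq:monte_carlo}). Hence $f_\theta(X_U, X_R = x_R)$ is a deterministic $\{0,1\}$-valued function of the random vector $X_U$, and any $\{0,1\}$-valued random variable is Bernoulli with parameter equal to its probability of taking the value $1$. This immediately gives $f_\theta(X_U, X_R = x_R) \sim \mathcal{B}(p)$ for some $p \in [0,1]$.

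For the second step I will evaluate $p = \Pr\big(\tilde{f}_\theta(X_U, X_R=x_R) \geq 0\big)$. Using the hypothesis that $\tilde{f}_\theta(X_U, X_R=x_R) \sim \mathcal{N}(m_f, \sigma_f^2)$ (which is supplied by Proposition~\ref{prop:3}), I standardize by writing $Z = (\tilde{f}_\theta - m_f)/\sigma_f$, so $Z \sim \mathcal{N}(0,1)$. Then
\begin{align*}
p &= \Pr(\tilde{f}_\theta \geq 0) = \Pr\!\left(Z \geq -\tfrac{m_f}{\sigma_f}\right) = 1 - \Phi\!\left(-\tfrac{m_f}{\sigma_f}\right) = \Phi\!\left(\tfrac{m_f}{\sigma_f}\right),
\end{align*}
where the last equality uses the symmetry $1 - \Phi(-t) = \Phi(t)$ of the standard normal CDF. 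Substituting the explicit expressions for $m_f$ and $\sigma_f^2$ from Eqs.~\eqref{eq:prop4_a}--\eqref{eq:prop4_b} of Proposition~\ref{prop:3} yields the stated form.

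There is no real obstacle here; the only subtlety worth flagging is the edge case $\sigma_f = 0$, which occurs precisely when $\theta_U^\top(\Sigma_{UU} - \Sigma_{UR}\Sigma_{RR}^{-1}\Sigma_{RU})\theta_U = 0$, i.e.\ when the projection of $\theta_U$ onto the conditional covariance is zero. In that degenerate case $\tilde{f}_\theta$ is almost-surely equal to $m_f$, and the Bernoulli parameter is $p = \bm{1}\{m_f \geq 0\}$, which is consistent with interpreting $\Phi(m_f/\sigma_f)$ as a limit. I will state the result for $\sigma_f > 0$ and note this convention for the degenerate case.
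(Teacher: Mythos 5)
Your proof is correct and takes essentially the same approach as the paper's: both observe that $f_\theta = \bm{1}\{\tilde{f}_\theta \geq 0\}$ is a $\{0,1\}$-valued variable and compute $p = \Pr(\tilde{f}_\theta \geq 0) = \Phi(m_f/\sigma_f)$ from the Gaussian hypothesis. Your version is slightly more explicit (the standardization step and the $\sigma_f = 0$ edge case), but the argument is the same.
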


\subsection{Testing pure core feature sets}
\label{sec:pure_core_set_linear}

In this subsection, we outline the methods for determining if a subset $U$ is a pure core feature set, and, if so, identifying its representative label. As per Definition \ref{def:core_set}, $U$ is a pure core feature set if $f_{\theta}(X_U, X_R =x_R) = \tilde{y}$ for all $X_U$. This implies that $\tilde{f}_{\theta}(X_U, X_R = x_R) = \theta^\top_U X_U + \theta^\top_R x_R$ must have the same sign for all $X_U$ in the range of $[-1, 1]^{|U|}$. Given the box constraint $X_U \in [-1, 1]^{|U|}$, the linearity of the model considered allows us to directly compute the maximum and minimum values of $\tilde{f}_{\theta}(X_U, X_R = x_R)$, rather than enumerating all possible values. Specifically, we have:
\begin{align*}
\max_{X_U} \theta^\top_U X_U + \theta^\top_R x_R &= \|\theta_U\|_1 + \theta^\top_R x_R\\
\min_{X_U} \theta^\top_U X_U + \theta^\top_R x_R &= - \|\theta_U\|_1 + \theta^\top_R x_R.
\label{eq:min_max_linear}
\end{align*}
Thus, if both these maximum and minimum values are negative (non-negative), then $U$ is considered a pure core feature set with representative label $\tilde{y}=0$ ($\tilde{y}=1$). If not, $U$ is not a pure core feature set.

Importantly, determining whether a subset $R$ of sensitive features $S$ constitutes a pure core feature set can be accomplished in linear time with respect to the  number of features. 
\begin{proposition}
\label{thm:core_linear_verf}
Assume $f_\theta$ is a linear classifier. Then, determining if a 
subset $U$ of sensitive features $S$ is a \emph{pure} core feature set 
can be performed in $O(|P| + |S|)$ time.
\end{proposition}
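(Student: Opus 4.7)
The plan is to leverage the closed-form expressions for the maximum and minimum of $\tilde{f}_{\theta}(X_U, X_R = x_R)$ over $X_U \in [-1,1]^{|U|}$ that were just derived, and then simply count the arithmetic operations needed to evaluate them and compare signs. First, unfolding the shorthand established in Section~\ref{sec:core_feature_sets}, the linear classifier's soft output reads
\begin{equation*}
\tilde{f}_{\theta}(X_U, X_R = x_R) \;=\; \theta_U^\top X_U \;+\; \theta_R^\top x_R \;+\; \theta_P^\top x_P,
\end{equation*}
so the only random term is the $\theta_U^\top X_U$ summand. By the box constraint and linearity, the per-coordinate maximizers are $X_{U,i}^\star = \mathrm{sign}(\theta_{U,i})$, giving the already displayed extrema $\pm\|\theta_U\|_1 + \theta_R^\top x_R + \theta_P^\top x_P$.

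Second, I would invoke the characterization above: $R$ is a pure core feature set exactly when these two extrema share a sign (both nonnegative, yielding representative label $\tilde y = 1$, or both negative, yielding $\tilde y = 0$); otherwise the prediction is not constant in $X_U$ and $R$ fails to be a core set. Note that this certificate is distribution-free and uses only the box assumption $\mathcal{X} \subseteq [-1,1]^d$.

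Third, I would tally the cost. The three scalar quantities $\|\theta_U\|_1$, $\theta_R^\top x_R$, and $\theta_P^\top x_P$ require $O(|U|)$, $O(|R|)$, and $O(|P|)$ elementary operations respectively; forming the two sums and performing the two sign comparisons is $O(1)$. Since $|U| + |R| = |S|$, the total work is $O(|P| + |S|)$, matching the claimed bound, and the representative label (when it exists) is read off for free from the common sign.

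The main ``obstacle'' here is pedagogical rather than technical: one must be careful that the shorthand $\tilde{f}_\theta(X_U, X_R = x_R)$ implicitly conditions on $X_P = x_P$, so the $\theta_P^\top x_P$ term must be included in the accounting, which is precisely what produces the $|P|$ contribution in the complexity. Beyond this bookkeeping, everything is closed form and the result drops out immediately from the min/max identities already established.
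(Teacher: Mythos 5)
Your proposal is correct and follows essentially the same route as the paper: compute the closed-form extrema $\pm\|\theta_U\|_1$ plus the fixed contribution of the revealed and public features, check whether the two extrema share a sign, and observe that this costs one pass over the coordinates. Your explicit accounting of the $\theta_P^\top x_P$ term hidden in the shorthand is a welcome clarification of where the $|P|$ in the bound comes from, but it does not change the argument.
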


\subsection{MinDRel-linear Algorithm and Evaluation}
\label{sec:MinDRellinear_eval}

A pseudo-code of MinDRel specialized for linear classifiers is reported in 
Algorithm \ref{alg:alg1}. 
At inference time, the algorithm takes as input a sample $x$ 
(which only exposes the set of public features $x_P$) and uses the training data
$D$ to estimate the mean and covariance matrix needed to compute the conditio-

\begin{wrapfigure}[19]{R}{0.58\textwidth}
\vspace{-2pt}\hspace*{4pt}
    \begin{minipage}{0.56\textwidth}
    \begin{algorithm}[H]
    \footnotesize
  \caption{MinDRel for linear classifiers}
  \label{alg:alg1}
  \setcounter{AlgoLine}{0}
  \SetKwInOut{Input}{input}
  \SetKwInOut{Output}{output}
  \SetKwRepeat{Do}{do}{while}
  \Input{A test sample $x$; training data $D$} 
  \Output{A core feature set $R$  and its representative label $\tilde y$\!\!\!\!\!\!\!}
  $(\mu, \Sigma)  \gets \left( \frac{1}{|D|} \sum_{(x,y) \in D} x, 
  \frac{1}{|D|} \sum_{(x,y) \in D} (x -\mu) (x - \mu)^\top\right)$\!\!\!\!\!\!\!\!\!\!\!\!\!\!\!\!\\ 
  Initialize $R = \emptyset$\\
  \While{True} {
    \eIf{$R$ is a core feature set with repr.~label $\tilde y$}{
      \Return{$(R, \tilde y)$}
    }{
    \ForEach{$j \in U$} {
       Compute $\blue{\Pr(X_j | X_R = x_R)}$ (using Prop.~\ref{prop:2})\\
      $\bm Z \gets \text{sample}(\blue{\Pr(X_j |X_R = x_R)})$ T times\\
       Compute $\Pr\left(\red{f_{\theta}(X_j = z, X_{U\setminus \{j\}}
        X_R = x_R)}\right)$\!\!\!\!\!\!\!\!\!\!
       (using Prop.~\ref{prop:3} and \ref{prop:4})\\
       Compute $F(X_j)$ (using Eq. \eqref{eq:monte_carlo})
       }
    }
     $j^* \gets \argmax_j F(X_j)$\\
     $(R, U) \gets R \cup \{j^*\},\; U \setminus \{j^*\}$
    }
    \vspace{-2pt}
    \end{algorithm}
    \end{minipage}
\end{wrapfigure}
nal distribution of the model predictions given the unrevealed features (line 1), 
as discussed above. 
After initializing empty the set of revealed features (line 2),
it iteratively releases a feature at a time until a core feature set 
(and its associated representative label) are determined, as
discussed in detail in Section \ref{sec:pure_core_set_linear}.
The released feature $X_{j^*}$ is the one, among the unrevealed features 
$U$, that maximizes the scoring function $F$ (line 12). Computing such 
a scoring function entails estimating the conditional distribution 
$\Pr(X_j | X_R = x_R)$ (line 8), constructing a sample set $\bm Z$ 
from such distribution (line 9), and approximating the distribution 
over soft model predictions through Monte Carlo sampling to compute (line 10).
Finally, after each iteration, the algorithm updates the set of the 
revealed and unrevealed features (line 13). 

Notice that MinDRel relies on estimating the mean vector and covariance matrix 
from the training data, which is considered public, for the scope of this paper. 
If the training data is private, various techniques exist to release DP mean, 
and variance \cite{liu2021robust,amin2019differentially} and can be readily adopted. 
Nonetheless, the protection of training data is beyond the focus of this work.

\textbf{Evaluation.} 
Next, this section evaluates MinDRel's effectiveness  in limiting data exposure 
during inference. The experiments are conducted on six standard UCI datasets \cite{UCIdatasets} 
Here, we present a selection of the results and discuss their trends. A comprehensive overview, additional analysis, and experiments are available in the Appendix.

\begin{wrapfigure}[11]{r}{0.4\textwidth}
\vspace{-22pt} 
\centering\includegraphics[width=\linewidth]{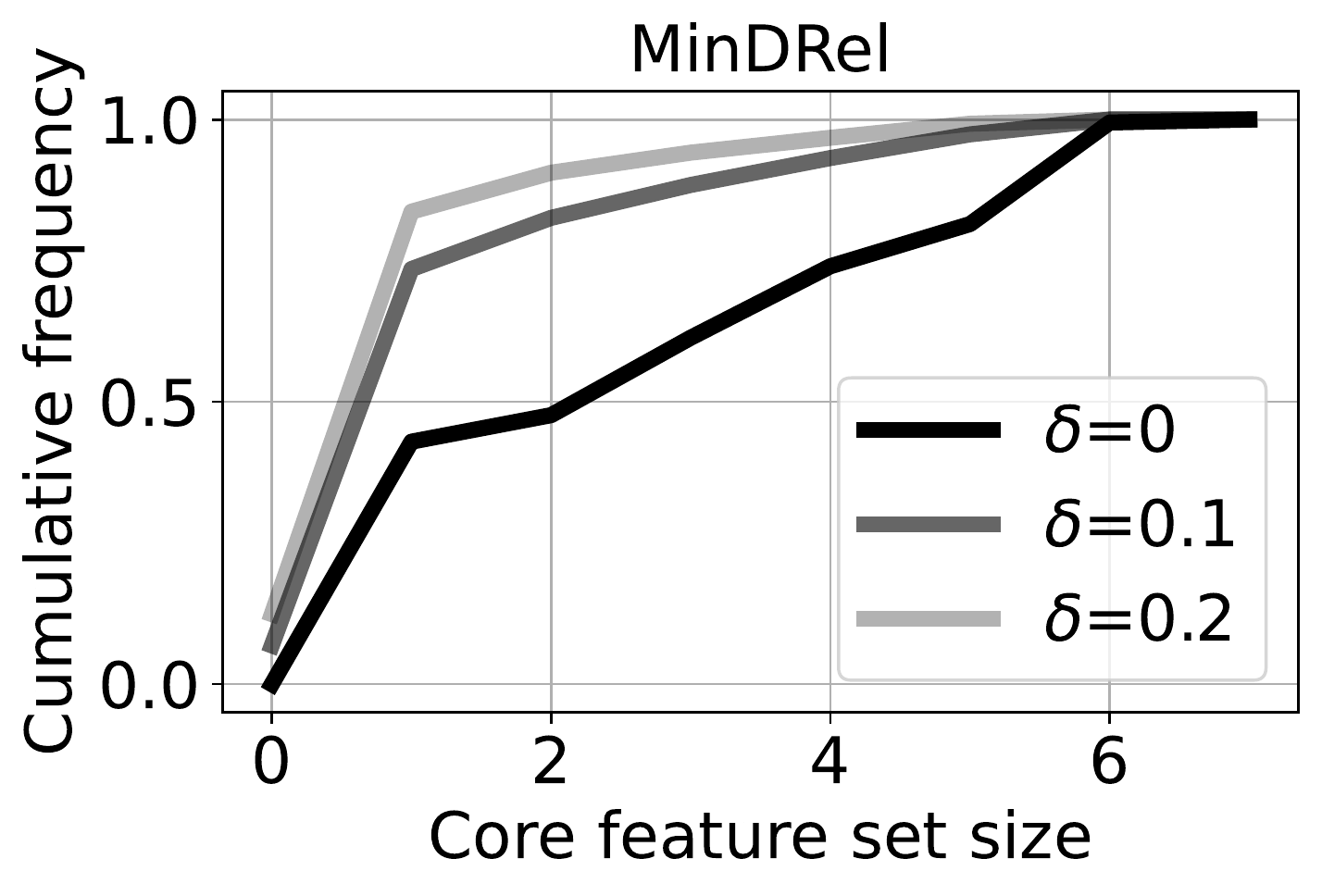}
\vspace{-20pt}
\caption{\small Cumulative count of users against their core feature set size for various $\delta$.}
\label{fig:hist_linear}
\end{wrapfigure}
Figure \ref{fig:hist_linear} reports the cumulative count of users (y-axis) 
against the minimum number of features they need to disclose (x-axis) to ensure 
confident predictions, for various failure probabilities $\delta$.
The model adopted is a Logistic Regression classifier trained on the Bank dataset \cite{UCIdatasets}. 
The data minimization achieved by MinDRel is clearly apparent. 
For each test sample, MinDRel identifies core feature sets that are 
significantly smaller than the total sensitive feature set size $|S|=7$. 
Interestingly, when $\delta >0$, it discovers core feature sets 
smaller than 2 for the majority of users. \hl{This implies that most 
users would only need to reveal a small portion of their sensitive 
data to achieve accurate model predictions with either absolute certainty or high confidence}.

To further emphasize the benefits of MinDRel, we compare it to two baselines: the \emph{optimal} model, which employs a brute force method to find the smallest core feature set and its representative label and assumes all sensitive features are known, and the \emph{all-features} model, which simply adopts the original classifiers using all the data features for each test sample. 
The performances are displayed for a varying number of sensitive attributes $|S| \in [2,5]$, while we delegate a study for larger $|S|$ to the Appendix (which excludes the time-consuming baseline \emph{optimal}, as intractable for large $|S|$). 
For each choice of $|S|$, we randomly select $|S|$ features from the entire feature set and designate
them as sensitive attributes. The remaining attributes are considered public. The average
accuracy and data leakage are then reported based on 100 random sensitive attribute selections. More details about the experimental settings are in Appendix Section \ref{sec:app_exp}.

\vspace{-2pt}
Figure \ref{fig:linear} depicts performance results in accuracy (top subplots/higher is better) and relative data leakage (bottom subplots/lower is better) with varying number of revealable sensitive features $|S|$. 
The comparison includes three MinDRel versions: \textsl{F-Score}, which utilizes the scoring function elaborated in Section \ref{sec:computingF} to select the next feature to disclose (left); \textsl{Importance}, which employs a feature importance criterion leveraging the model $f_\theta$ parameters to rank features, detailed in the

\begin{wrapfigure}[16]{r}{0.6\linewidth}
\vspace{-4pt}
\centering\includegraphics[width=\linewidth]{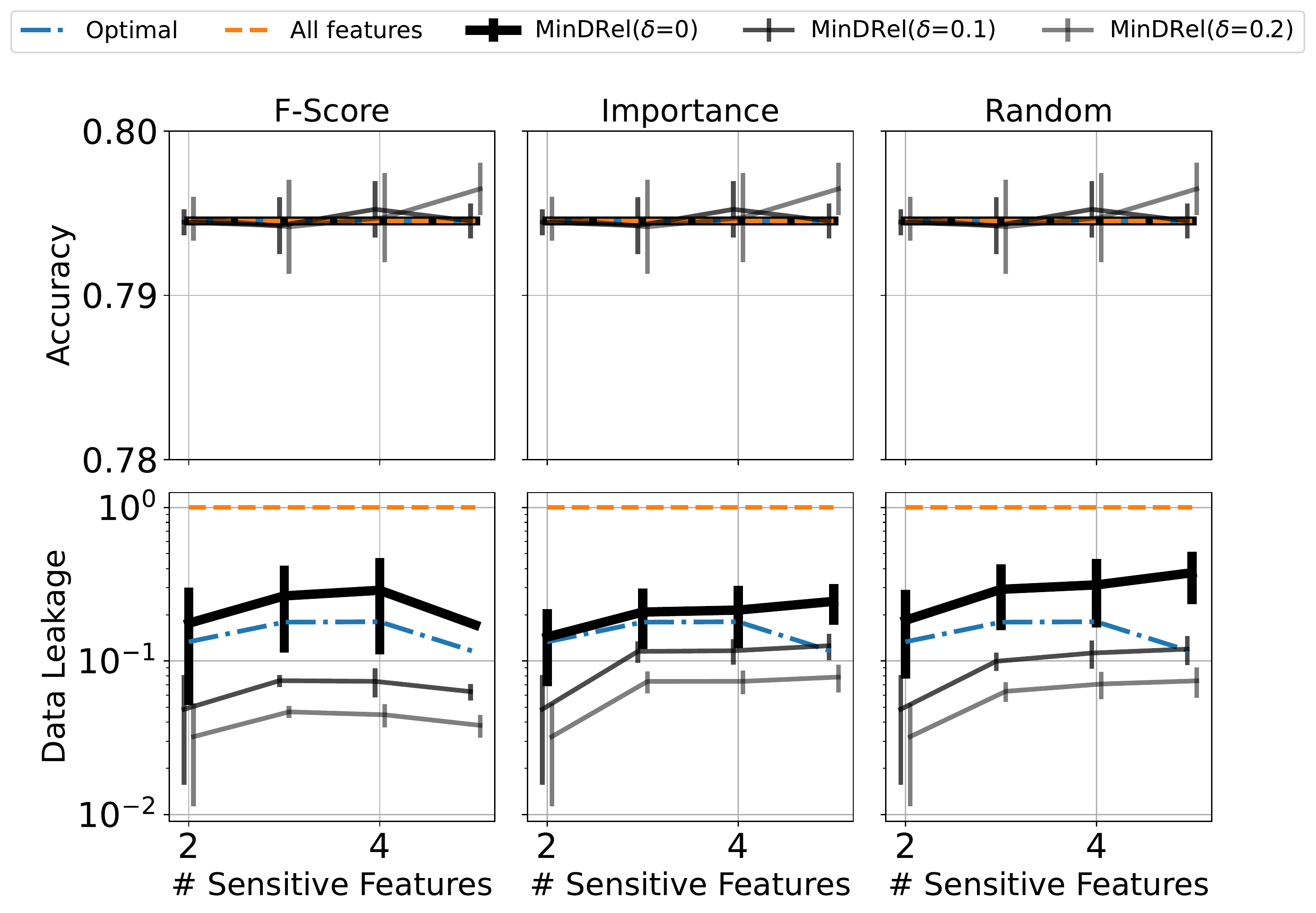}
\vspace{-18pt}
\caption{\small Logistic regression classifiers.}
\label{fig:linear}
\end{wrapfigure}
Appendix (middle); and \textsl{Random}, which arbitrarily selects the next feature to reveal (right). All three versions use the same test procedure to validate whether a set qualifies as a core feature set.
Firstly, notice that MinDRel achieves equal or better accuracy than the optimal mechanism and baseline that utilizes all features during testing. \hl{This suggests that data minimization can be accomplished under linear models without compromising accuracy!}
Next, observe that an increase in $\delta$ aids in safeguarding data minimization, as evidenced by the drop in relative data leakage (note the logarithmic scale of the y-axis). This is attributable to the influence of $\delta$ on the test for identifying a core feature set, thereby reducing its size.
Finally, the proposed scoring function outperforms the other versions in terms of data leakage minimization, allowing users to disclose substantially fewer sensitive features.
The Appendix also includes experiments with larger quantities of sensitive features, presenting analogous trends, where, however, a comparison against an optimal baseline was not possible in due to its exponential time complexity.

\section{MinDRel for non-linear classifiers}
\label{sec:MinDRel_nonlinear}

Next, the paper focuses on computing the estimate $\Pr(\red{\tilde{f}_\theta(X_U , X_R = x_R)})$  and determining core feature sets when $f_\theta$ is a nonlinear classifier. Then, the section presents results that illustrate the practical benefits of  MinDRel for data minimization at inference time on neural networks.

\subsection{Efficiently estimating $Pr(\red{\tilde{f}_{\theta}(X_U, X_R =x_R)})$}

First notice that even if the input features $x$ are jointly Gaussian, the outputs $f_\theta(x)$ will no longer adhere to a Gaussian distribution after a non-linear transformation. This complicates estimating the distribution $\Pr(\tilde{f}_{\theta}(X_U, X_R \!=\!x_R)$. To tackle this challenge, the paper takes a local approximation of the model predictions $\tilde{f}_\theta$ using a Gaussian distribution, as demonstrated in the following result.
\begin{theorem}
\label{thm:taylor_approx}
The distribution of the  random variable $\tilde{f}_\theta = \tilde{f}_{\theta}(X_U, X_R =x_R)$ where $X_U \sim \mathcal{N}\big(\mu^{\mbox{pos}}_{U}, \Sigma^{\mbox{pos}}_{U} \big)$ can be  approximated by a Normal distribution as
\begin{equation}\label{eq:taylor_arppox}
    \tilde{f}_\theta \sim \mathcal{N} \left( 
    \red{\tilde{f}_{\theta}( X_U = \mu^{pos}_U, X_R = x_R)}, \, g^\top_{U}\Sigma^{pos}_{U}g_U\right),
\end{equation}
where $g_{U} = \nabla_{X_U}\tilde{f}_{\theta}(  X_U = \mu^{pos}_{U}, X_R = x_R)$ is the gradient of model prediction at $X_U = \mu^{pos}_{U}$. 
\end{theorem}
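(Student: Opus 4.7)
The plan is to view this statement as an instance of the classical delta method (first-order Taylor linearization of a nonlinear function of a Gaussian random vector), and to argue that the claimed mean and variance fall out directly from the linear approximation.

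First I would perform a first-order Taylor expansion of the soft prediction $\tilde{f}_\theta(\,\cdot\,, X_R = x_R)$ around the point $X_U = \mu^{pos}_U$. Concretely, I would write
\begin{equation*}
  \tilde{f}_\theta(X_U, X_R = x_R) \;\approx\; \tilde{f}_\theta(\mu^{pos}_U, X_R = x_R) + g_U^\top (X_U - \mu^{pos}_U),
\end{equation*}
with the remainder controlled by the Hessian of $\tilde{f}_\theta$ in a neighborhood of $\mu^{pos}_U$. By definition, $g_U = \nabla_{X_U}\tilde{f}_\theta(\mu^{pos}_U, X_R = x_R)$, so the constant term and the linear coefficient of the expansion match the quantities appearing in the theorem.

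Next I would invoke the standard fact that an affine transformation of a Gaussian vector is again Gaussian. Since $X_U \sim \mathcal{N}(\mu^{pos}_U, \Sigma^{pos}_U)$, the random variable $g_U^\top(X_U - \mu^{pos}_U)$ is Gaussian with mean $0$ and variance $g_U^\top \Sigma^{pos}_U g_U$. Adding the deterministic shift $\tilde{f}_\theta(\mu^{pos}_U, X_R = x_R)$ only translates the mean, giving
\begin{equation*}
  \tilde{f}_\theta(X_U, X_R = x_R) \;\approx\; \mathcal{N}\!\left(\tilde{f}_\theta(\mu^{pos}_U, X_R = x_R),\; g_U^\top \Sigma^{pos}_U g_U \right),
\end{equation*}
which is exactly \eqref{eq:taylor_arppox}.

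The main obstacle, and the reason this is stated as an \emph{approximation} rather than an exact distributional identity, is controlling the Taylor remainder: the approximation is only accurate when either $\tilde{f}_\theta$ is close to affine on the effective support of $X_U$, or $\Sigma^{pos}_U$ is small enough (e.g., after conditioning on many revealed features) that higher-order terms are negligible. In a fully rigorous write-up I would either (i) state regularity assumptions on $\tilde{f}_\theta$ (continuously differentiable, bounded Hessian on a neighborhood) and bound the error in, say, total variation by a term of order $\operatorname{tr}(\Sigma^{pos}_U)$, or (ii) simply present the result as a local linearization in the spirit of the delta method, which is the standard treatment in the Bayesian deep learning and Gaussian process literature cited earlier in the paper.
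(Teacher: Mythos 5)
Your proposal is correct and follows essentially the same route as the paper's proof: a first-order Taylor expansion of $\tilde{f}_\theta$ around $\mu^{pos}_U$, followed by the closure of Gaussians under affine maps to read off the mean $\tilde{f}_\theta(\mu^{pos}_U, X_R = x_R)$ and variance $g_U^\top \Sigma^{pos}_U g_U$. Your additional discussion of when the linearization is trustworthy (small $\Sigma^{pos}_U$ or near-affine $\tilde{f}_\theta$) goes beyond what the paper states but is consistent with its treatment of the result as a local approximation.
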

Therein, the mean vector $\mu^{\mbox{pos}}_{U}$ and covariance matrix $\Sigma^{\mbox{pos}}_{U}$ of $Pr(X_U |X_R \!=\! x_R)$ are derived from Proposition \ref{prop:2}. This result leverages a first-order Taylor approximation of model $f_\theta$ around its mean.

\subsection{Testing pure core feature sets}
\label{sec:pure_core_non_linear}

Unlike linear classifiers, the case for nonlinear models lacks an exact and efficient method to determine whether a set is a core feature set. This is primarily due to the non-convex nature of the adopted models, which poses challenges in finding a global optimum. This section thus proposes an approximate testing routine and demonstrates its practical ability to significantly minimize data leakage during testing while maintaining high accuracy.

To determine if a subset $U$ of the sensitive features $S$ is a pure core feature set, we consider a set $Q$  of 
input points $(X_U, x_R)$. The entries corresponding to the revealed features are set to the value $x_R$, while the entries corresponding to the unrevealed features are sampled from the distribution $\Pr(X_U | X_R = x_R)$. 
The test verifies if the model predictions $f_{\theta}(x)$ remain constant for all $x$ in $Q$. 
In the next section, we will show that even considering arbitrary classifiers (e.g., we use standard neural networks), MinDRel can reduce data leakage dramatically when compared to standard approaches. 

\subsection{MinDRel-nonlinear Algorithm and Evaluation}
The MinDRel algorithm for nonlinear classifiers differs from Algorithm \ref{alg:alg1} primarily in the method used to compute the estimates for the distribution $\Pr(f_\theta(X_j = z, X_{U\setminus {j}}, X_R=x_R))$ of the soft model predictions  (line 11). 
In this case, this estimate is computed by leveraging the results from Theorem \ref{thm:taylor_approx} and Proposition \ref{prop:4}. Moreover, the termination test of the algorithm is based on the discussion in the previous section. Appendix \ref{app:pseudocodes} reports a description of the algorithm's pseudocode.

\textbf{Evaluation.}
To assess MinDRel's performance in reducing data leakage when employing standard nonlinear classifiers, we use a neural network with two hidden layers and ReLU activation functions and train the models using stochastic gradient descent (see Appendix \ref{sec:app_exp} for additional details). The evaluation criteria, baselines, and benchmarks adhere to the same parameters set in Section \ref{sec:MinDRellinear_eval}.

 \begin{wrapfigure}[16]{r}{0.6\linewidth}
\vspace{-18pt}
\centering\includegraphics[width=\linewidth]{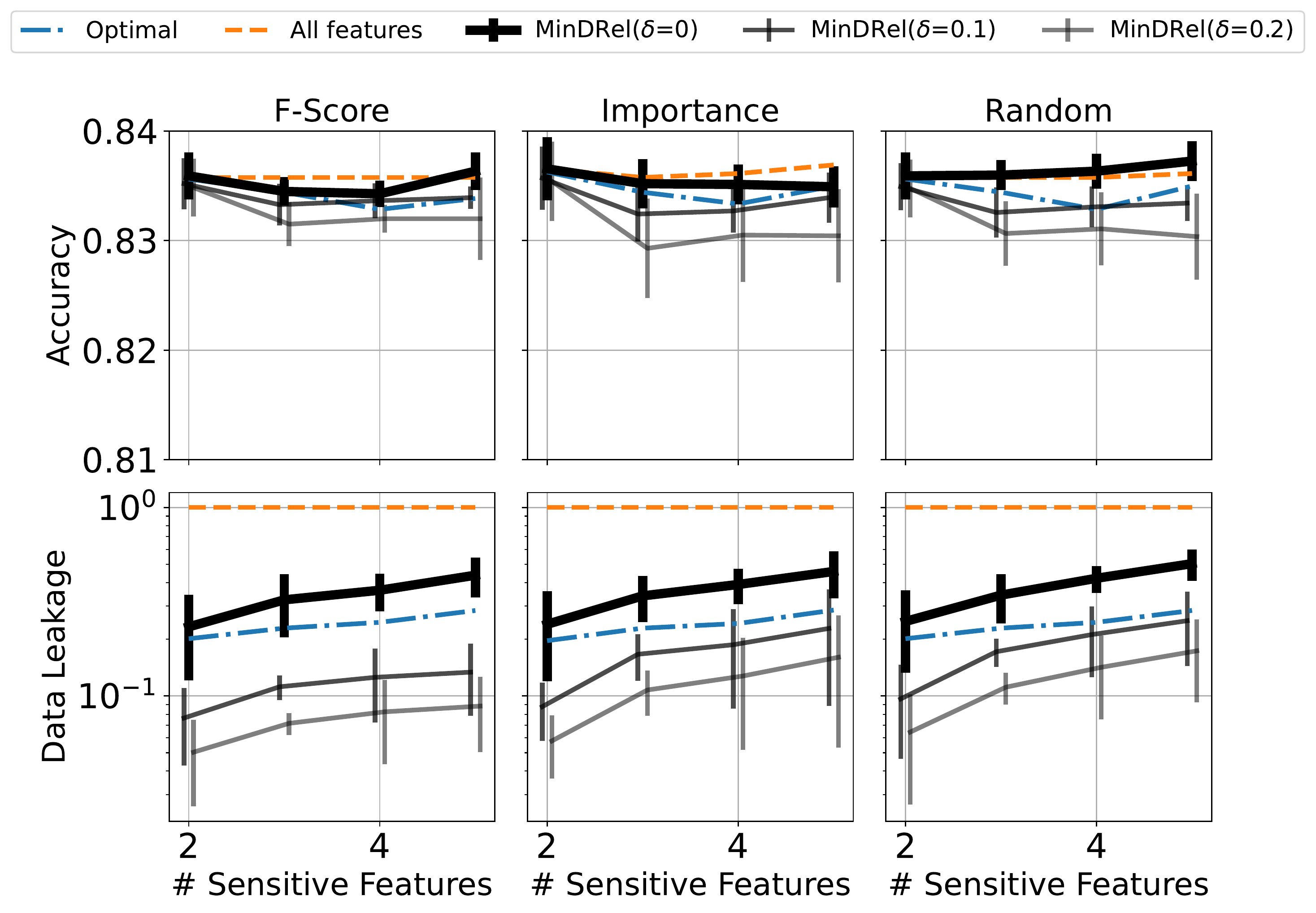}
\vspace{-18pt}
\caption{\small Neural network classifiers.}
\label{fig:nonlinear}
\end{wrapfigure}
Figure \ref{fig:nonlinear} showcases the results in terms of accuracy (top subplots) and data leakage (bottom subplots). Unlike linear classifiers, nonlinear models using MinDRel with a failure probability of $\delta=0$ cannot guarantee the same level of accuracy as the ``\textsl{all features}'' baseline model. However, this accuracy difference is minimal. Notably, a failure probability of $\delta=0$ enables users to disclose less than half, and up to 90\% fewer sensitive features across different datasets, while achieving accuracies similar to those of conventional classifiers. Next, similarly to as observed in the previous section, MinDRel with the proposed F-score selector significantly outperforms the other variants in terms of data leakage minimization. 
Furthermore, when considering higher failure probabilities, data leakage decreases significantly. For instance, with $\delta \leq 0.1$, users need to disclose only 5\% of their sensitive features while maintaining accuracies comparable to the baseline models (the largest accuracy difference reported was 0.005\%). 
These results are significant: \hl{They show that the introduced notion of privacy leakage and the proposed algorithm can become a valuable tool to protect individuals' data privacy at test time, without significantly compromising accuracy.}.

\section{Related work}
\label{app:related_work}

While we are not aware of studies on data minimization for inference problems, we draw connections with differential privacy, feature selection, and active learning.

\textbf{Differential Privacy.}
Differential Privacy (DP) \cite{Dwork:06} is a strong privacy notion that determines and bounds the risk of disclosing sensitive information of individuals participating in a computation. 
In the context of machine learning, DP ensures that algorithms can learn the relations between data and predictions while preventing them from memorizing sensitive information about any specific individual in the training data. 
In such a context, DP is primarily adopted to protect training data \cite{abadi:16, JMLR:v12:chaudhuri11a,xie2018differentially} and thus the setting contrasts with that studied in this work, which focuses on identifying the superfluous features revealed by users at \emph{test time} to attain high accuracy. 
Furthermore, achieving tight constraints in differential privacy often comes at the cost of sacrificing accuracy, while the proposed privacy framework can reduce privacy loss without sacrificing accuracy under the assumption of linear classifiers.

\textbf{Feature selection.} 
Feature selection \cite{chandrashekar2014survey} is the process of identifying and selecting a relevant subset of features from a larger set for use in model construction, with the goal of improving performance by reducing the complexity and dimensionality of the data. 
The problem studied in this work can be considered a specialized form of feature selection with the added consideration of personalized levels, where each individual may use a different subset of features. This contrasts standard feature selection \cite{li2017feature}, which selects the same subset of features for each data sample.
Additionally, and unlike traditional feature selection, which is performed during training and independent of the deployed classifier \cite{chandrashekar2014survey}, the proposed framework performs feature selection at deployment time and is inherently dependent on the deployed classifier.

\textbf{Active learning.}
Finally. the proposed framework shares similarities with active learning \cite{fu2013survey,settles2009active}, whose goal is to iteratively select samples for experts to label in order to construct an accurate classifier with the least number of labeled samples. Similarly, the proposed framework iteratively asks individuals to reveal one attribute given their released features so far, with the goal of minimizing the uncertainty in model predictions.

Despite these similarities, the proposed data minimization for inference concept is motivated by a privacy need and pertains to the analysis of features to release to induce the same level of accuracy as if all features were released.

\section{Conclusion and Future Work}
This paper introduced the concept of data minimization at test time whose goal is to minimize the number of features that individuals need to disclose during model inference while maintaining accurate predictions from the model. The motivations of this notion are grounded in the privacy risks imposed by the adoption of learning models in consequential domains, by the significant efforts required by organizations to verify the accuracy of the released information, and align with the data minimization principle outlined in the GDPR. 
The paper then discusses an iterative and personalized algorithm that selects the features each individual should release with the goal of minimizing data leakage while retaining exact (in the case of linear classifiers) or high (for non-linear classifiers) accuracy. 
Experiments over a range of benchmarks and datasets indicate that individuals may be able to release as little as 10\% of their information without compromising the accuracy of the model, providing a strong argument for the effectiveness of this approach in protecting privacy while preserving the accuracy of the model. 

\textbf{Avenues for future work.} 
While this study is the first attempt at defining data minimization during inference, it also opens up avenues for further research. First, establishing bounds on the data leakage provided by our proposed method compared to an optimal procedure presents an interesting and open challenge. Second, exploring the relationship between data minimization principles and their consequent disparate impacts presents another open direction. Lastly, developing effective algorithms to provably construct core feature sets for non-linear classifiers is another important area of investigation.

\section*{Acknowledgements}
This research was partially supported by NSF grant 2133169, 
a Google Research Scholar Award and an Amazon Research Award. 

\bibliography{lib.bib}
\bibliographystyle{abbrv}

\appendix
\setcounter{theorem}{0}
\setcounter{corollary}{0}
\setcounter{lemma}{0}
\setcounter{proposition}{0}

\section{Missing proofs}
\label{app:proofs}

\begin{proposition}
    \label{app:thm:delta_vs_entropy}
    Given a core feature set $R \subseteq S$ with failure probability $\delta <0.5$, then there exists a function $\epsilon:\mathbb{R}\to\mathbb{R}$ that is  monotonic decreasing function with $\epsilon(1)=0$ such that: 
  \[
      H\big[ f_\theta( X_U, X_R = x_R ) \big] \leq \epsilon(1- \delta),
  \]
where $H[Z] \text{=} -\sum_{z \in [L]}\Pr(Z=z) \log \Pr(Z=z)$ 
is the entropy of the random variable $Z$. 
\end{proposition}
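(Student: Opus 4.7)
The plan is to reduce this to a straightforward maximum-entropy calculation. Fix the random variable $Z \doteq f_\theta(X_U, X_R = x_R)$, which takes values in $[L]$, and set $p \doteq \Pr(Z = \tilde{y}) \geq 1 - \delta$. By the core feature set assumption, $p$ is at least $1-\delta > 0.5$. The remaining mass $1-p$ is spread across the other $L-1$ labels, but the exact split is arbitrary. To get an upper bound on $H[Z]$ that depends only on $\delta$ (and on $L$), I will maximize $H[Z]$ over all probability distributions on $[L]$ whose mass on $\tilde y$ is exactly $p$.

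\textbf{Step 1: reduce to a maximum-entropy problem.} Writing $H[Z] = -p \log p - \sum_{z \neq \tilde y} q_z \log q_z$ where $\sum_{z \neq \tilde y} q_z = 1-p$, the second sum is maximized (by a standard Lagrange multiplier argument or by the concavity of $-x\log x$) when the $q_z$ are uniform, namely $q_z = \tfrac{1-p}{L-1}$ for every $z\neq \tilde y$. Plugging this in gives
\begin{equation*}
H[Z] \;\leq\; -p\log p - (1-p)\log(1-p) + (1-p)\log(L-1) \;\eqqcolon\; \psi(p).
\end{equation*}

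\textbf{Step 2: monotonicity in $p$.} The first two terms form the binary entropy $H_b(p)$, which is strictly decreasing on $(1/2, 1]$, and the third term $(1-p)\log(L-1)$ is linear and decreasing in $p$. Since $p \geq 1-\delta > 1/2$, $\psi$ is decreasing on the relevant range. Therefore $H[Z] \leq \psi(p) \leq \psi(1-\delta)$, and I can define
\begin{equation*}
\epsilon(q) \;\doteq\; \psi(q) \;=\; -q\log q - (1-q)\log(1-q) + (1-q)\log(L-1),
\end{equation*}
which yields $H[Z] \leq \epsilon(1-\delta)$ as required.

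\textbf{Step 3: verify the boundary condition and monotonicity of $\epsilon$.} At $q = 1$ both $-q\log q$ and $(1-q)\log(1-q)$ (taking the standard limit $0\log 0 = 0$) and $(1-q)\log(L-1)$ vanish, so $\epsilon(1) = 0$. Monotonic decrease of $\epsilon$ on $(1/2,1]$ follows from the monotonicity of $\psi$ established in Step 2, so $\epsilon$ satisfies the claimed properties on the range of interest; one can extend $\epsilon$ outside this interval by, e.g., setting it equal to $\log L$ for $q\leq 1/2$ without spoiling monotonicity.

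The whole proof is essentially routine; the only mildly delicate point is Step 1, where I need to be careful that the ``maximum-entropy conditional on $\Pr(Z=\tilde y)=p$'' is indeed achieved by the uniform split, which is a one-line application of Jensen's inequality to $-x\log x$. No obstacle is expected beyond this.
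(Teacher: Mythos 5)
Your proof is correct, and its core is the same as the paper's: the bound comes from the monotonic decrease of the binary entropy $-p\log p-(1-p)\log(1-p)$ on $(1/2,1]$, which is exactly the function the paper takes as $\epsilon$ in its binary-case argument. Where you go further is the multi-class case: the paper proves only $L=2$ and asserts that the extension is "similar," whereas you carry it out explicitly by maximizing the entropy of the non-representative mass subject to $\sum_{z\neq\tilde y}q_z=1-p$, obtaining the uniform split and the bound $\psi(p)=H_b(p)+(1-p)\log(L-1)$; for $L=2$ this collapses to the paper's $\epsilon$. This buys a concrete, $L$-dependent $\epsilon$ that is verifiably monotone decreasing and vanishes at $1$ for all $L$, which is the content the paper leaves implicit. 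Your attention to extending $\epsilon$ outside $(1/2,1]$ (e.g.\ by the constant $\log L$) is a harmless technicality the paper also glosses over. No gaps.
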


\begin{proof}
In this proof, we demonstrate the binary classification case. The extension to a multi-class scenario can be achieved through a similar process.

By the definition of the core feature set, there exists a representative label, denoted as $\tilde{y} \in \{0,1\}$ such that the probability of $P(f_{\theta}(X_U, X_R =x_R) = \tilde{y})$ is greater than or equal to $1 -\delta $. 
Without loss of generality, we assume that the representative label is $\tilde{y} =1$. 
Therefore, if we denote $Z$ as the probability of $Pr( f_{\theta}(X_U, X_R =x_R) =1) $, then the probability of $Pr(f_{\theta}(X_U, X_R =x_R) =0) = 1-Z$. 
Additionally, we have $Z \geq 1-\delta > 0.5$ due to the definition of core feature set and by the assumption that $\delta <0.5$. The entropy of the model's prediction can be represented as: 
$H\big[ f_\theta( X_U, X_R = x_R ) \big]  = -Z \log Z - (1-Z) \log (1-Z) $.

Choose $\epsilon(Z) = -Z \log Z - (1-Z) \log (1-Z) $. 
The derivative of $\epsilon(Z)$ is given by $ \frac{d\epsilon(Z)}{dZ} = \log \frac{1-Z}{Z} <0 $, as $Z > 0.5$. As a result, $\epsilon(Z)$ is a monotonically decreasing function, so $\epsilon(Z) \leq \epsilon(1- \delta)$

When $\delta =0$, we have $Z=1$, and by the property of the entropy $ H\big[ f_\theta( X_U, X_R = x_R ) \big]  = 0$. 
\end{proof}

\begin{proposition}
    \label{thm:cond_entropy}
    Given two subsets $R$ and $R'$ of sensitive features $S$, with 
    $R \subseteq R'$, 
    \[
      H\big( f_\theta(X_U, X_R=x_R) \big) \geq 
      H\big( f_\theta(X_{U'}, X_{R'}=x_{R'}) \big),
    \]
    where $U=S\setminus R$ and $U'=S\setminus R'$.
\end{proposition}

\begin{proof}
This is due to the property that conditioning reduces the  uncertainty, or the well-known \emph{information never hurts}  theorem in information theory \cite{krause2005note}.
\end{proof}

\begin{proposition}
\label{app:prop:2} The conditional distribution of any subset of unrevealed features 
$U' \in U$, given the the values of released features $X_R =x_R$ is given by:
\begin{align*}
\Pr(X_{U'} | X_R = x_R)  &= 
    \mathcal{N}\bigg(\mu_{U'}  + \Sigma_{U', R} \Sigma^{-1}_{R R} 
        (x_R - \mu_R),\;
        \Sigma_{U'U'} - \Sigma_{U'R}\Sigma^{-1}_{RR} \Sigma_{R,U'} \bigg),
\end{align*}
where $\Sigma$ is the covariance matrix 
 \end{proposition}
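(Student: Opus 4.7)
The plan is to prove this as the standard conditional distribution formula for a multivariate Gaussian by constructing an explicit affine decomposition of $X_{U'}$ into a term that is a function of $X_R$ plus a term independent of $X_R$. Since $X = (X_P, X_S)$ is jointly Gaussian, any sub-vector such as $(X_{U'}, X_R)$ is again jointly Gaussian, with mean vector and covariance matrix obtained by selecting the appropriate entries and blocks of $\mu$ and $\Sigma$. This reduces the problem to the classical two-block Gaussian conditioning identity.

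The key step is to define the residual
\[
Z \;=\; X_{U'} - \mu_{U'} - \Sigma_{U'R}\Sigma_{RR}^{-1}(X_R - \mu_R),
\]
and verify by a direct bilinear covariance computation that $\mathrm{Cov}(Z, X_R) = \Sigma_{U'R} - \Sigma_{U'R}\Sigma_{RR}^{-1}\Sigma_{RR} = 0$. Because $(Z, X_R)$ is still jointly Gaussian (as a linear map of the Gaussian vector $(X_{U'}, X_R)$), zero covariance implies full independence. A second short calculation gives $\mathbb{E}[Z]=0$ and $\mathrm{Cov}(Z) = \Sigma_{U'U'} - \Sigma_{U'R}\Sigma_{RR}^{-1}\Sigma_{RU'}$, using the symmetry $\Sigma_{RU'} = \Sigma_{U'R}^\top$.

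Writing $X_{U'} = \mu_{U'} + \Sigma_{U'R}\Sigma_{RR}^{-1}(X_R - \mu_R) + Z$ and conditioning on $X_R = x_R$, the first two terms become the deterministic vector $\mu_{U'} + \Sigma_{U'R}\Sigma_{RR}^{-1}(x_R - \mu_R)$, while $Z$ retains its marginal law by independence. Hence $X_{U'}\mid X_R=x_R$ is Gaussian with the claimed mean and covariance, which is exactly the statement of the proposition.

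The main obstacle is essentially bookkeeping: one must be careful that the sub-matrices $\Sigma_{U'R}$, $\Sigma_{RR}$, etc., are being extracted consistently with the notation convention in the paper (rows indexed by the first index set, columns by the second), and that $\Sigma_{RR}$ is invertible, which requires the released block of the covariance to be non-degenerate. Invertibility can be assumed without loss of generality by restricting to a maximal affinely independent sub-block of $X_R$; otherwise some components of $X_R$ are almost surely determined by the others and can be removed before applying the formula. No deep ingredient is needed beyond the characterization ``jointly Gaussian and uncorrelated implies independent,'' which is the one nontrivial fact used.
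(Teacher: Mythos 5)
Your proof is correct and complete, but it is worth noting that the paper does not actually prove this proposition: its ``proof'' is a one-line citation to Chapter 2.3.2 of Bishop's \emph{Pattern Recognition and Machine Learning}, where the result is derived by partitioning the precision matrix and completing the square in the exponent of the joint density. Your route is genuinely different and, arguably, cleaner: you construct the residual $Z = X_{U'} - \mu_{U'} - \Sigma_{U'R}\Sigma_{RR}^{-1}(X_R-\mu_R)$, check $\mathrm{Cov}(Z,X_R)=0$ by a two-line bilinearity computation, and invoke the single nontrivial fact that jointly Gaussian and uncorrelated implies independent, after which the conditional law of $X_{U'}$ given $X_R=x_R$ falls out of the affine decomposition $X_{U'} = \mu_{U'} + \Sigma_{U'R}\Sigma_{RR}^{-1}(X_R-\mu_R) + Z$. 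The density-manipulation approach buys an explicit formula for the conditional density (useful if one also wants the normalizing constant or the Schur-complement interpretation of the conditional precision); your approach avoids densities entirely, so it degrades gracefully when $\Sigma_{U'U'}$ is singular, and it makes transparent \emph{why} the conditional covariance does not depend on $x_R$. You also flag the invertibility of $\Sigma_{RR}$ and propose restricting to a non-degenerate sub-block --- a hypothesis the paper silently assumes throughout (it inverts $\Sigma_{RR}$ in Propositions 2--4 without comment), so making it explicit is a genuine improvement rather than pedantry. The only cosmetic caveat: the paper's shorthand means the conditioning is really on $(X_R=x_R, X_P=x_P)$ jointly, so in the algorithm $R$ should be read as ``revealed sensitive features together with the public features''; your argument applies verbatim with that reading.
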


 \begin{proof}
 This is a well-known property of the Gaussian distribution and we refer the reader to Chapter 2.3.2 of the textbook \cite{bishop2006pattern} for further details. 
 \end{proof}


\begin{proposition}
\label{app:prop:3}
The model predictions before thresholding, $\tilde{f}_{\theta}(X_U, X_R = x_R) =\theta_U X_U + \theta_R x_R $ 
is a random variable with a Gaussian distribution $\mathcal{N}\big(m_f, \sigma_f \big)$, 
where
\begin{align}
    m_f &= \theta_R x_R + \theta^\top_U \big(  \mu_U + \Sigma_{U R} \Sigma^{-1}_{R R} ( x_{R} - \mu_R) \big) \label{eq:app_prop4_a}\\ 
    \sigma^2_f &= \theta^\top_{U} \big( \Sigma_{U U} - \Sigma_{U R} \Sigma^{-1}_{R R} \Sigma_{R U} \big) \theta_{U}, \label{eq:app_prop4_b}
\end{align}
where $\theta_U$ is the sub-vector of parameters $\theta$ corresponding
to the unrevealed features $U$. 
 \end{proposition}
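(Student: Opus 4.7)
The statement is essentially that an affine function of a Gaussian vector is itself Gaussian, so the plan is to reduce to Proposition~\ref{app:prop:2} and then push a linear map through the resulting distribution. First, I would invoke Proposition~\ref{app:prop:2} with $U' = U$ to conclude that, conditional on $X_R = x_R$, the unrevealed features satisfy
\[
X_U \mid X_R = x_R \;\sim\; \mathcal{N}\!\left(\mu_U + \Sigma_{UR}\Sigma_{RR}^{-1}(x_R - \mu_R),\; \Sigma_{UU} - \Sigma_{UR}\Sigma_{RR}^{-1}\Sigma_{RU}\right).
\]
Denote the conditional mean by $\tilde\mu_U$ and the conditional covariance by $\tilde\Sigma_U$.

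Next, I would observe that conditional on $X_R = x_R$, the soft prediction $\tilde{f}_\theta(X_U, X_R = x_R) = \theta_U^\top X_U + \theta_R^\top x_R$ is an affine function of the Gaussian vector $X_U$, with deterministic offset $\theta_R^\top x_R$ and linear coefficient $\theta_U$. By the standard closure of Gaussian distributions under affine maps, this is a scalar Gaussian whose mean is $\theta_R^\top x_R + \theta_U^\top \tilde\mu_U$ and whose variance is $\theta_U^\top \tilde\Sigma_U\, \theta_U$. Substituting the expressions for $\tilde\mu_U$ and $\tilde\Sigma_U$ then yields exactly Equations~\eqref{eq:app_prop4_a} and~\eqref{eq:app_prop4_b}.

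\textbf{Main obstacle.} There is essentially no real obstacle here: the result is a two-line consequence of Proposition~\ref{app:prop:2} combined with the affine-closure property of Gaussians. The only aspect that warrants a sentence of care is the implicit assumption that the joint input vector $(X_P, X_S)$ is jointly Gaussian (so that the conditional of $X_U$ given $X_R = x_R$ is again Gaussian rather than merely having the stated mean and covariance); I would make this modeling assumption explicit at the start of the proof, consistent with the Gaussianity hypothesis used to justify Proposition~\ref{app:prop:2}. With that caveat stated, the computation of $m_f$ and $\sigma_f^2$ is immediate by linearity of expectation and the quadratic form formula $\mathrm{Var}(a^\top Y) = a^\top \mathrm{Cov}(Y)\, a$.
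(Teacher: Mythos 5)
Your proposal is correct and follows essentially the same route as the paper's proof: both invoke Proposition~\ref{prop:2} to obtain the conditional Gaussian law of $X_U$ given $X_R = x_R$ and then apply closure of Gaussians under affine maps to read off $m_f$ and $\sigma_f^2$. Your explicit remark that the joint Gaussianity of the inputs must be assumed is a reasonable clarification, but it does not change the argument.
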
 

\begin{proof}
The proof of this statement is straightforward due to the property that a linear combination of Gaussian variables $X_U$ is also Gaussian. Additionally, the posterior distribution of $X_U$ is already provided in Proposition  \ref{prop:2}.
\end{proof}

\begin{proposition}
\label{app:prop:4}
Let the model predictions prior thresholding $\tilde{f}_\theta(X_U, X_R =x_R)$, 
be a random variable following a Gaussian distribution $\mathcal{N} (m_f, 
\sigma^2_f)$. Then, the model prediction following thresholding $f_{\theta}( X_U, X_R =x_R)$ 
is a random variable following a Bernoulli distribution $Bern(p)$ 
with $p = \Phi( \frac{m_f}{\sigma_f})$, where $\Phi(\cdot)$ refers to 
the  CDF of the standard normal distribution, and $m_f$ and $\sigma_f$,
are given in Equations \eqref{eq:prop4_a} and \eqref{eq:prop4_b}, respectively.
\end{proposition}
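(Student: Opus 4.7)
The plan is a direct standardization argument: since $f_\theta = \mathbf{1}\{\tilde f_\theta \geq 0\}$ is the indicator of a thresholding operation applied to a real-valued random variable, $f_\theta$ can only take the values $0$ and $1$, so it is automatically Bernoulli, and the only nontrivial content is to identify its parameter $p = \Pr(f_\theta = 1)$ in closed form.

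First, I would write $p = \Pr(f_\theta(X_U, X_R=x_R) = 1) = \Pr(\tilde f_\theta(X_U, X_R=x_R) \geq 0)$ by the definition of the thresholding operator adopted in Section~\ref{sec:computingF}. Next, using the hypothesis $\tilde f_\theta \sim \mathcal{N}(m_f, \sigma_f^2)$, I would introduce the standardized variable $Z = (\tilde f_\theta - m_f)/\sigma_f$, which by the affine-invariance of the Gaussian family follows $\mathcal{N}(0,1)$. Rewriting the event $\{\tilde f_\theta \geq 0\}$ as $\{Z \geq -m_f/\sigma_f\}$ yields
\[
 p = \Pr\!\left(Z \geq -\tfrac{m_f}{\sigma_f}\right) = 1 - \Phi\!\left(-\tfrac{m_f}{\sigma_f}\right) = \Phi\!\left(\tfrac{m_f}{\sigma_f}\right),
\]
where the last equality uses the symmetry $1 - \Phi(-t) = \Phi(t)$ of the standard normal CDF. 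Substituting the expressions for $m_f$ and $\sigma_f^2$ from Equations~\eqref{eq:prop4_a} and \eqref{eq:prop4_b} (which are provided by Proposition~\ref{prop:3}) completes the argument.

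There is essentially no obstacle here: the statement is a one-line corollary of Proposition~\ref{prop:3} together with the fact that the CDF of a Gaussian evaluated at a threshold gives the tail probability. The only mild caveat is handling the degenerate case $\sigma_f = 0$, which would occur only when $\theta_U = 0$ (so the unrevealed features have no effect on the prediction) or when the conditional covariance is singular along $\theta_U$; in that case $\tilde f_\theta$ equals $m_f$ deterministically and $f_\theta$ is the degenerate Bernoulli with $p = \mathbf{1}\{m_f \geq 0\}$, which is the natural limit of $\Phi(m_f/\sigma_f)$ as $\sigma_f \to 0^+$. I would note this briefly and otherwise assume $\sigma_f > 0$ throughout.
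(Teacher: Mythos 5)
Your proposal is correct and follows essentially the same route as the paper's proof, which simply invokes ``properties of the normal distribution'' to assert $p = \Phi(m_f/\sigma_f)$; you merely spell out the standardization step $\Pr(\tilde f_\theta \geq 0) = 1 - \Phi(-m_f/\sigma_f) = \Phi(m_f/\sigma_f)$ that the paper leaves implicit. Your added remark on the degenerate case $\sigma_f = 0$ is a sensible refinement the paper omits.
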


\begin{proof}
In the case of a binary classifier, we have $f_{\theta}(x) = \bm{1}\{ \tilde{f}_{\theta}(x) \geq 0 \}$. 
If $\tilde{f}$ follows a normal distribution, denoted as $\tilde{f} \sim \mathcal{N}(m_f, \sigma^2_f)$, then by the properties of the normal distribution, $ f{\theta} $ follows a Bernoulli distribution, denoted as $f_{\theta} \sim Bern(p)$, with parameter $p =\Phi( \frac{m_f}{\sigma_f})$, where $\Phi(\cdot)$ is the cumulative density function of the standard normal distribution.
\end{proof}

\begin{proposition}
\label{app:thm:core_linear_verf}
Assume $f_\theta$ is a linear classifier. Then, determining if a 
subset $U$ of sensitive features $S$ is a \emph{pure} core feature set 
can be performed in $O(|P| + |S|)$ time.
\end{proposition}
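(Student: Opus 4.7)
The plan is to reduce the test for whether $U$ is a pure core feature set to a sign comparison of two scalars, each computable in linear time in $|P| + |S|$. By Definition~\ref{def:core_set} with $\delta = 0$, the complementary set $R = S \setminus U$ is a pure core feature set precisely when there exists $\tilde{y} \in \{0,1\}$ with $f_\theta(X_U, X_R = x_R) = \tilde{y}$ for every $X_U$ in the feature domain $[-1,1]^{|U|}$. Since $f_\theta(x) = \bm{1}\{\tilde{f}_\theta(x) \geq 0\}$ and $\tilde{f}_\theta$ is linear, this is equivalent to requiring that $\theta_U^\top X_U + \theta_R^\top x_R + \theta_P^\top x_P$ be entirely non-negative or entirely negative on the box.

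The first step is to obtain closed forms for $M := \max_{X_U \in [-1,1]^{|U|}} \tilde{f}_\theta(X_U, X_R = x_R)$ and $m := \min_{X_U \in [-1,1]^{|U|}} \tilde{f}_\theta(X_U, X_R = x_R)$. The objective is separable across the coordinates of $X_U$, and for each $i \in U$ the choice $X_{U,i} = \operatorname{sign}(\theta_{U,i})$ maximizes $\theta_{U,i} X_{U,i}$ while the opposite sign minimizes it. This yields $M = \|\theta_U\|_1 + \theta_R^\top x_R + \theta_P^\top x_P$ and $m = -\|\theta_U\|_1 + \theta_R^\top x_R + \theta_P^\top x_P$, matching the derivation already given in Section~\ref{sec:pure_core_set_linear}. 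Consequently, $U$ is pure with representative label $\tilde{y} = 1$ iff $m \geq 0$; pure with $\tilde{y} = 0$ iff $M < 0$; otherwise $U$ is not a pure core feature set.

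To finish, I would count operations. Computing $\|\theta_U\|_1$ takes $O(|U|)$ arithmetic operations; the inner products $\theta_R^\top x_R$ and $\theta_P^\top x_P$ require $O(|R|)$ and $O(|P|)$ operations, respectively; and combining these three scalars and performing the two sign tests is $O(1)$. Since $|U| + |R| = |S|$, the total cost is $O(|P| + |S|)$, as claimed. There is no substantive obstacle here: the whole argument rests on the linearity of $\tilde{f}_\theta$ and the separability of the box constraint $[-1,1]^{|U|}$. The only care needed is to be explicit that the public contribution $\theta_P^\top x_P$ must be included in both $M$ and $m$ (since the shorthand $\tilde{f}_\theta(X_U, X_R = x_R)$ suppresses it) and that this contribution is what forces the $|P|$ term in the runtime; everything else is a routine accounting of scalar arithmetic.
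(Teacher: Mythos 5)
Your proposal is correct and follows essentially the same route as the paper: compute the box-constrained max and min of the linear score in closed form via $\pm\|\theta_U\|_1$ plus the fixed offset from the revealed features, then test their signs, with a linear-time operation count. Your explicit inclusion of the $\theta_P^\top x_P$ term (which the paper's shorthand suppresses) is a welcome clarification, since it is precisely what accounts for the $|P|$ in the stated runtime.
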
 

\begin{proof}
As discussed in the main text, to test if a subset $U$ is a core feature set or not, we need to check  if the following two terms have the same sign (either negative or non-negative):
\begin{align}
\begin{split}
\max_{X_U} \,  & \theta^\top_U X_U + \theta^\top_R x_R = \|\theta_U\|_1 + \theta^\top_R x_R \\
\min_{X_U} \, & \theta^\top_U X_U + \theta^\top_R x_R = - \|\theta_U\|_1 + \theta^\top_R x_R.
\label{eq:min_max_linear}
\end{split}
\end{align}
These can be solved in time $O( |P| + |S|)$ due to the property of the linear equality above.
\end{proof}

\begin{theorem}
\label{thm:taylor_approx}
The distribution of the  random variable $\tilde{f}_\theta = \tilde{f}_{\theta}(X_U, X_R =x_R)$ where $X_U \sim \mathcal{N}\big(\mu^{\mbox{pos}}_{U}, \Sigma^{\mbox{pos}}_{U} \big)$ can be  approximated by a Normal distribution as
\begin{align}
    \tilde{f}_\theta \sim \mathcal{N} \big( \tilde{f}_{\theta}( X_U = \mu^{pos}_U, X_R = x_R) , g^\top_{U}\Sigma^{pos}_{U}g_U\big) 
    \label{eq:taylor_arppox}
\end{align}
where $g_{U} = \nabla_{X_U}\tilde{f}_{\theta}(  X_U = \mu^{pos}_{U}, X_R = x_R)$ is the gradient of model prediction at $X_U = \mu^{pos}_{U}$. 
\end{theorem}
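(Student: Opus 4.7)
The plan is to obtain the stated Gaussian approximation via a first-order Taylor expansion of $\tilde{f}_\theta$ about the conditional mean $\mu^{pos}_U$ of the unrevealed features, and then exploit the elementary fact that an affine transformation of a Gaussian vector is itself Gaussian. Concretely, writing $h(X_U) \doteq \tilde{f}_\theta(X_U, X_R = x_R)$ as a function of the random vector $X_U$ alone (with $x_R$ held fixed), the first-order expansion around $\mu^{pos}_U$ reads
\[
h(X_U) \;\approx\; h(\mu^{pos}_U) \;+\; g_U^\top\,(X_U - \mu^{pos}_U),
\]
where $g_U = \nabla_{X_U} h(\mu^{pos}_U)$ is precisely the gradient appearing in the theorem statement.

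Next I would argue that the right-hand side is an affine function of the Gaussian vector $X_U \sim \mathcal{N}(\mu^{pos}_U, \Sigma^{pos}_U)$, so by the standard closure property of Gaussians under affine maps the approximation is itself univariate Gaussian. Its mean is $h(\mu^{pos}_U) = \tilde{f}_\theta(X_U = \mu^{pos}_U, X_R = x_R)$ because $\mathbb{E}[X_U - \mu^{pos}_U] = 0$ kills the linear term in expectation, and its variance is $\mathrm{Var}\!\left(g_U^\top (X_U - \mu^{pos}_U)\right) = g_U^\top\, \mathrm{Cov}(X_U)\, g_U = g_U^\top \Sigma^{pos}_U g_U$, matching exactly the expression in Eq.~\eqref{eq:taylor_arppox}. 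The conditional moments $\mu^{pos}_U$ and $\Sigma^{pos}_U$ themselves are provided by Proposition~\ref{prop:2} applied to $U' = U$.

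The only genuine obstacle is the fact that the equality in~\eqref{eq:taylor_arppox} is an approximation, not an identity: truncating the Taylor series discards an $O(\|X_U - \mu^{pos}_U\|^2)$ remainder. I would make this explicit by noting that the approximation is accurate whenever $\tilde{f}_\theta$ is sufficiently smooth on the effective support of $X_U$ and the conditional covariance $\Sigma^{pos}_U$ is small (so higher-order terms contribute negligibly to both the mean and the variance). Under these conditions, the distribution of $h(X_U)$ is well approximated by the Gaussian above; for strict linearity (e.g.\ the linear-classifier setting of Proposition~\ref{prop:3}), the remainder vanishes and the approximation becomes exact, providing a consistency check. No other delicate step is required: everything else is a routine application of gradient-based linearization and the affine-closure property of the normal family.
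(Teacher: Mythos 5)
Your proposal is correct and follows essentially the same route as the paper's proof: a first-order Taylor linearization of $\tilde{f}_\theta$ about $\mu^{pos}_U$ followed by the closure of Gaussians under affine maps, with the mean and variance read off from the linearized expression. Your additional remarks on the truncation error and the exactness in the linear case are sensible elaborations but do not change the argument.
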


\begin{proof}
The proof relies on the first Taylor approximation of classifier $\tilde{f}$ around its mean: 
\begin{align}
 \tilde{f}_{\theta}(X_U, X_R = x_R, ) &\approx \tilde{f}_{\theta}( X_U = \mu^{pos}_{U}, X_R = x_R)  + (X_U - \mu^{pos}_U)^T \nabla_{X_U}\tilde{f}_{\theta}(X_U = \mu^{pos}_U ,  X_R = x_R)
 \label{app:eq:taylor_approx}
\end{align}
Since $X_U \sim \mathcal{N}\big(\mu^{\mbox{pos}}_{U}, \Sigma^{\mbox{pos}}_{U} \big) $ hence $X_{U} -\mu^{\mbox{pos}}_{U} \sim  \mathcal{N}\big(\boldsymbol{0}, \Sigma^{\mbox{pos}}_{U} \big) $. By the properties of normal distribution, the right-hand side of Equation (\ref{app:eq:taylor_approx}) is a linear combination of Gaussian variables, and it is also Gaussian.
\end{proof}

\section{Algorithms Pseudocode}
\label{app:pseudocodes}
The pseudocode for MinDRel for non-linear classifiers is presented in Algorithm \ref{alg:alg2}. There are two main differences between this algorithm and the case of linear classifiers. Firstly, unlike linear classifiers, the procedure of pure core feature testing on line 5  does not require the guarantee (see again Section \ref{sec:pure_core_non_linear}). The accuracy of the testing procedures depends on the number of random samples that we evaluate. The greater the number of drawn samples, the more likely the testing procedure is to be accurate. During experiments, we draw $10^5$ samples to perform the testing.  Additionally, we use Theorem \ref{thm:taylor_approx} to estimate the distribution of the soft prediction as seen on line 11, as the exact distribution cannot be computed analytically as in the case of linear classifiers.

\begin{algorithm}[t]
  \caption{MinDRel for non-linear classifiers}
  \label{alg:alg2}
  \setcounter{AlgoLine}{0}
  \SetKwInOut{Input}{input}
  \SetKwInOut{Output}{output}
  \SetKwRepeat{Do}{do}{while}
  \Input{A test sample $x$; training data $D$} 
  \Output{A core feature set $R$  and its representative label $\tilde y$}
  $\mu \gets \frac{1}{|D|} \sum_{(x,y) \in D} x$\\
  $\Sigma  \gets \frac{1}{|D|} \sum_{(x,y) \in D} (x -\mu) (x - \mu)^\top$\\ 
  Initialize $R = \emptyset$\\
  \While{True} {
    \eIf{$R$ is a core feature set with repr.~label $\tilde y$}{
      \Return{$(R, \tilde y)$}
    }{
    \ForEach{$j \in U$} {
       Compute $\Pr(X_j | X_R = x_R)$  (using Prop.~\ref{prop:2})\\
      $\bm Z \gets \text{sample}(\Pr(X_j |X_R = x_R))$ T times\\
       Compute $\Pr\left(f_{\theta}(X_j = z, X_{U\setminus \{j\}}X_R = x_R)\right)\!\!\!\!\!\!\!$  \, ( using Theorem \ref{thm:taylor_approx})\\
       Compute $F(X_j)$ (using Eq. \eqref{eq:monte_carlo})
       }
    }
     $j^* \gets \argmax_j F(X_j)$\\
     $R \gets R \cup \{j^*\}$\\
     $U \gets U \setminus \{j^*\}$\\
    }
\end{algorithm}

\section{Extension from binary to multiclass classification}
\label{app:multiclass}
In the main text, we provide the implementation of MinDRel  for binary classification problems.  In this section, we extend the method to the multiclass classification problem.
 
\subsection{Estimating $P(f_{\theta}(X_U, X_R =x_R))$}

In order to achieve our goals of determining if a subset is a core feature set for a given $\delta >0$, and computing the entropy in the scoring function, we need to estimate the distribution of $f_{\theta}(X_U, X_R =x_R)$. In this section, we first discuss the method of computing the distribution of $\tilde{f}_{\theta}(X_U, X_R =x_R)$ for both linear and non-linear models. Once this is done, we then address the challenge of estimating the hard label distribution $P(f{\theta}(X_U, X_R =x_R))$.

It is important to note that, under the assumption that the input features $X$ are normally distributed with mean $\mu$ and covariance matrix $\Sigma$, the linear classifier $\tilde{f}_{\theta} = \theta^\top {x}$ will also have a multivariate normal distribution. Specifically, if $X_U \sim \mathcal{N}(\mu^{pos}_U, \Sigma^{pos}_U)$, then $\tilde{f}_{\theta}(X_U, X_R =x_R) \sim \mathcal{N}(\theta^\top_R x_R + \theta^T_{U} \mu^{pos}_U, \theta^\top_U \Sigma \theta_U)$.

For non-linear classifiers, the output $f_{\theta}(X_U, X_R =x_R) $ is not a Gaussian distribution due to the non-linear transformation. To approximate it, we use Theorem \ref{thm:taylor_approx} which states that the non-linear function $\tilde{f}_{\theta}(X_U, X_R =x_R)$ can be approximated as a multivariate Gaussian distribution.

\paragraph{Challenges when estimating $P(f_{\theta}(X_U, X_R =x_R))$.} 
For multi-class classification problems, the hard label $f_{\theta}(X_U, X_R =x_R)$ is obtained by selecting the class with the highest score, which is given by $\argmax_{i \in [L]} \tilde{f}^i_{\theta}(X_U, X_R =x_R)$. However, due to the non-analytical nature of the $\argmax$ function, even when $\tilde{f}_{\theta}(X_U, X_R =x_R)$ follows a Gaussian distribution, the distribution of $f_{\theta}(X_U, X_R =x_R)$ cannot be computed analytically. To estimate this distribution, we resort to Monte Carlo sampling. Specifically, we draw a number of samples from $P(\tilde{f}_{\theta}(X_U, X_R =x_R))$, and for each class $y\in \cY$ we approximate the probability $P(f_{\theta}(X_U, X_R =x_R) =y)$ as  the proportion of samples that fall in that class $y$.

We provide experiments of MinDRel for multi-class classification cases in Section \ref{sec:app_multi_class_exp}.

\section{Experiments details}
\label{sec:app_exp}

\paragraph{Datasets information.}
To show the advantages of the suggested MinDRel technique for safeguarding feature-level privacy, we employ benchmark datasets in our experiments. These datasets include both binary and multi-class classification datasets. The proposed method was evaluated on the following datasets for binary classification tasks:

\begin{enumerate}
    \item Bank dataset \cite{UCIdatasets}.  The objective of this task is to predict whether a customer will subscribe to a term deposit using data from various features, including but not limited to call duration and age. There are a total of 16 features available for this analysis.
    
    \item Adult income dataset \cite{UCIdatasets}. The goal of this task is to predict whether an individual earns more than \$50,000 annually. After preprocessing the data, there are a total of 40 features available for analysis, including but not limited to occupation, gender, race, and age.
    
    \item Credit card default dataset \cite{UCIdatasets}.  
    The objective of this task is to predict whether a customer will default on a loan. The data used for this analysis includes 22 different features, such as the customer's age, marital status, and payment history.
    
     \item Car insurance  dataset \cite{car_insurance}. 
     The task at hand is to predict whether a customer has filed a claim with their car insurance company. The dataset for this analysis is provided by the insurance company and includes 16 features related to the customer, such as their gender, driving experience, age, and credit score.
\end{enumerate}

Furthermore, we also evaluate the proposed method on two additional multi-class classification datasets:
\begin{enumerate}
\item Customer segmentation dataset  \cite{customer}. 
The task at hand is to classify a customer into one of four distinct categories: A, B, C, and D. The dataset used for this task contains 9 different features, including profession, gender, and working experience, among others.

\item Children fetal health dataset \cite{fetal}. The task at hand is to classify the health of a fetus into one of three categories: normal, suspect, or pathological, using data from CTG (cardiotocography) recordings. The data includes approximately 21 different features, such as heart rate and the number of uterine contractions.
\end{enumerate}

\paragraph{Settings.} 
For each dataset, 70\% of the data will be used for training the classifiers, while the remaining 30\% will be used for testing. The number of sensitive features, denoted as $|S|$, will be chosen randomly from the set of all features. The remaining features will be considered public. 100 repetition experiments will be performed for each choice of $|S|$, under different random seeds, and the results will be averaged. All methods that require Monte Carlo sampling will use 100 random samples. The performance of different methods will be evaluated based on accuracy and data leakage. Two different classifiers will be considered.

\begin{enumerate}
    \item Linear classifiers: We use Logistic Regression as the base classifier.
    \item Nonlinear classifiers: The nonlinear classifiers used in this study consist of a neural network with two hidden layers, using the ReLU activation function. The number of nodes in each hidden layer is set to 10. The network is trained using stochastic gradient descent (SGD) with a batch size of 32 and a learning rate of 0.001 for 300 epochs. 
\end{enumerate}

For Bayesian NN, we employ the package \emph{bayesian-torch} \cite{krishnan2022bayesiantorch} with the default settings. The base regressor is a neural network with one hidden layer that has 10 hidden nodes and a ReLU activation function. We train the network in 300 epochs with a learning rate of 0.001. 

\paragraph{Baseline models.} 
We compare our proposed algorithms with the following baseline models: 
\begin{enumerate}
\item \textbf{All features}: This refers to the usage of the original classifier which asks users to reveal \textbf{all} sensitive features. 
\item \textbf{Optimal}: This method involves evaluating all possible subsets of sensitive features ($2^{|S|}$ in total)  in order to identify the minimum \emph{pure} core feature set. For each subset, the verification algorithm is used to determine whether it is a pure core feature set. The minimum pure core feature set that is found is then selected. It should be noted that as all possible subsets are evaluated, all sensitive feature values must be revealed. Therefore, this approach is not practical in real-world scenarios. However, it does provide a lower bound on data leakage for MinDRel (when $\delta=0$).
\end{enumerate} 

\paragraph{MinDRel models.}
In MinDRel there are two important steps: (1) core feature set verification and (2) selection next feature to reveal. As additional baselines, we keep the core feature set verification and vary the selection process. We consider the following three feature selection methods: 

\begin{enumerate}
\item \textbf{F-Score}: We choose the feature based on the amount of information  on model prediction we gain after revealing one feature as provided in Equation \ref{eq:exp_entropy}. 

\item \textbf{Importance}: We reveal the unknown sensitive features based on the descending order of feature importance until we find a core feature set. The feature importance is determined as follows. We firstly fit a Logistic Regression $f_{\theta}(x) = 1\{ \theta^T x \geq 0 \}$ on the training dataset $D$ using all features (public included). The importance of one sensitive feature $i \in S$ is determined by $\| \theta_i\|_2$.

\item \textbf{Random}: We reveal the unrevealed  sensitive feature in random order until the revealed set is a core feature set.
\end{enumerate}

\paragraph{Metrics.} We compare all different algorithms in terms of accuracy and data leakage:
\begin{enumerate}
\item Accuracy. For algorithms that are based on the core feature set, such as our MinDRel and  Optimal, the representative label is used as the model's prediction. Again, the representative label for $\delta=0$  can be identified by using testing pure core feature set procedures. For $\delta >0$, the representative label is given by $\tilde{y} = \mbox{argmax}_{y \in \cY} \int P(f_{\theta}(X_U =x_U, X_R =x_R) = y) P(X_U |X_R =x_R) dx_u  $.    The accuracy is then determined by comparing this representative label to the ground truth.

\item Data leakage. We compute the percentage of the number of sensitive features that users need to provide on the test set. Small data leakage  is considered better. 
\end{enumerate}

\begin{figure*}[t]
\centering
\begin{subfigure}[b]{0.48\textwidth}
\includegraphics[width = 1\linewidth]{v2_compare_linear_bank.pdf}
\caption{Bank dataset}
\end{subfigure}
\begin{subfigure}[b]{0.48\textwidth}
\includegraphics[width =1\linewidth]{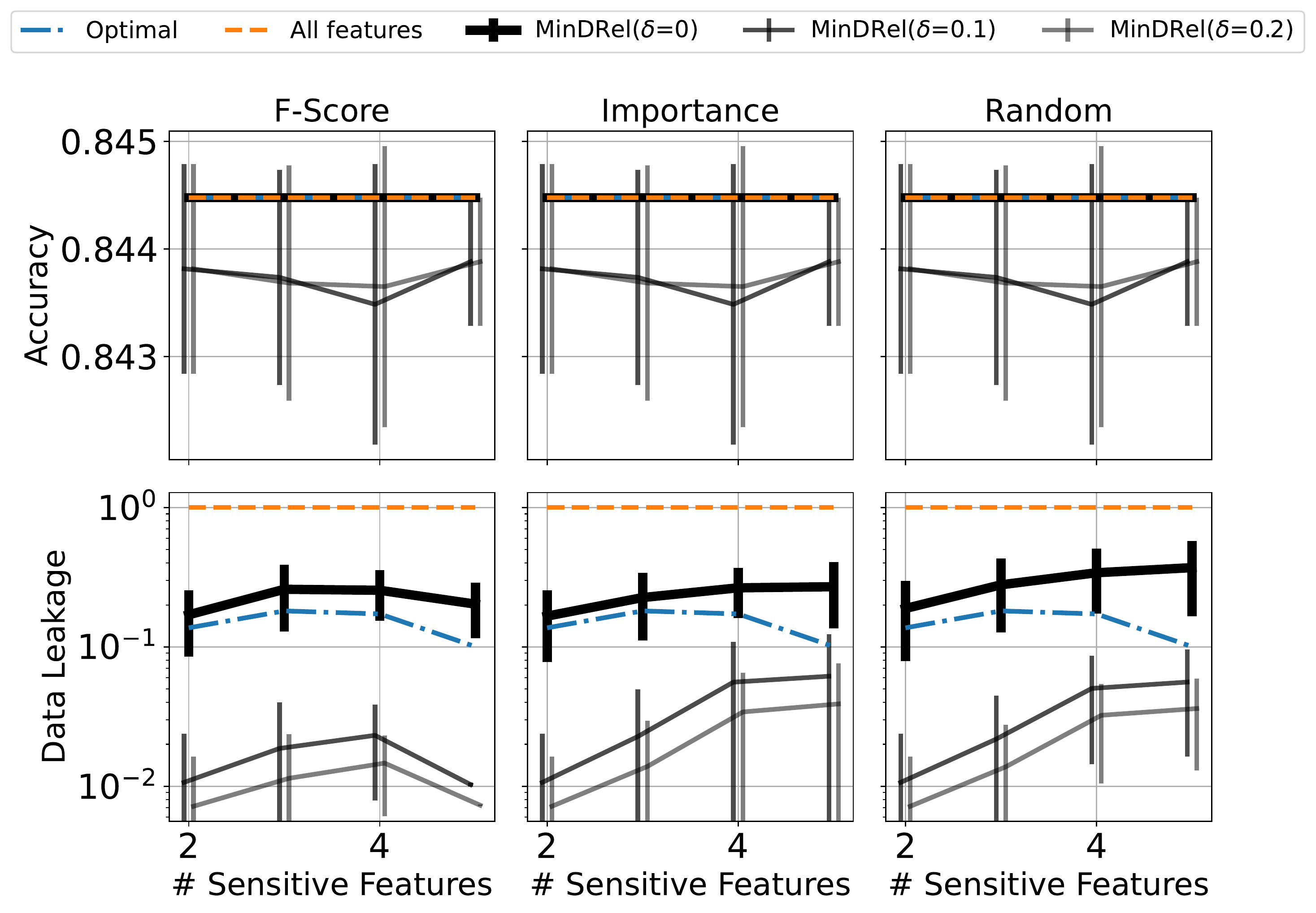}
\caption{Income dataset}
\end{subfigure}
\begin{subfigure}[b]{0.48\textwidth}
\includegraphics[width=1\linewidth]{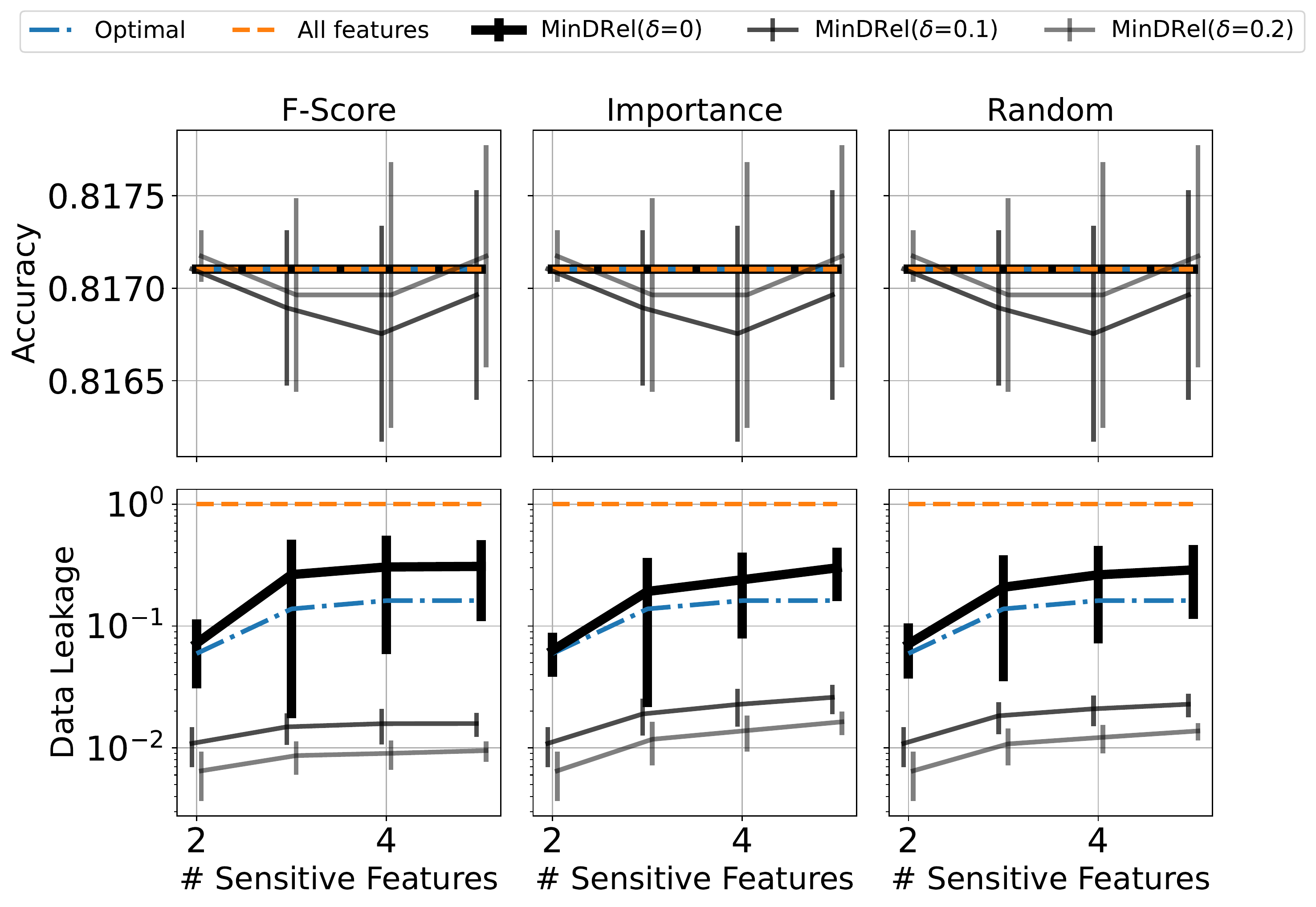}
\caption{Credit dataset}
\end{subfigure}
\begin{subfigure}[b]{0.48\textwidth}
\includegraphics[width = 1\linewidth]{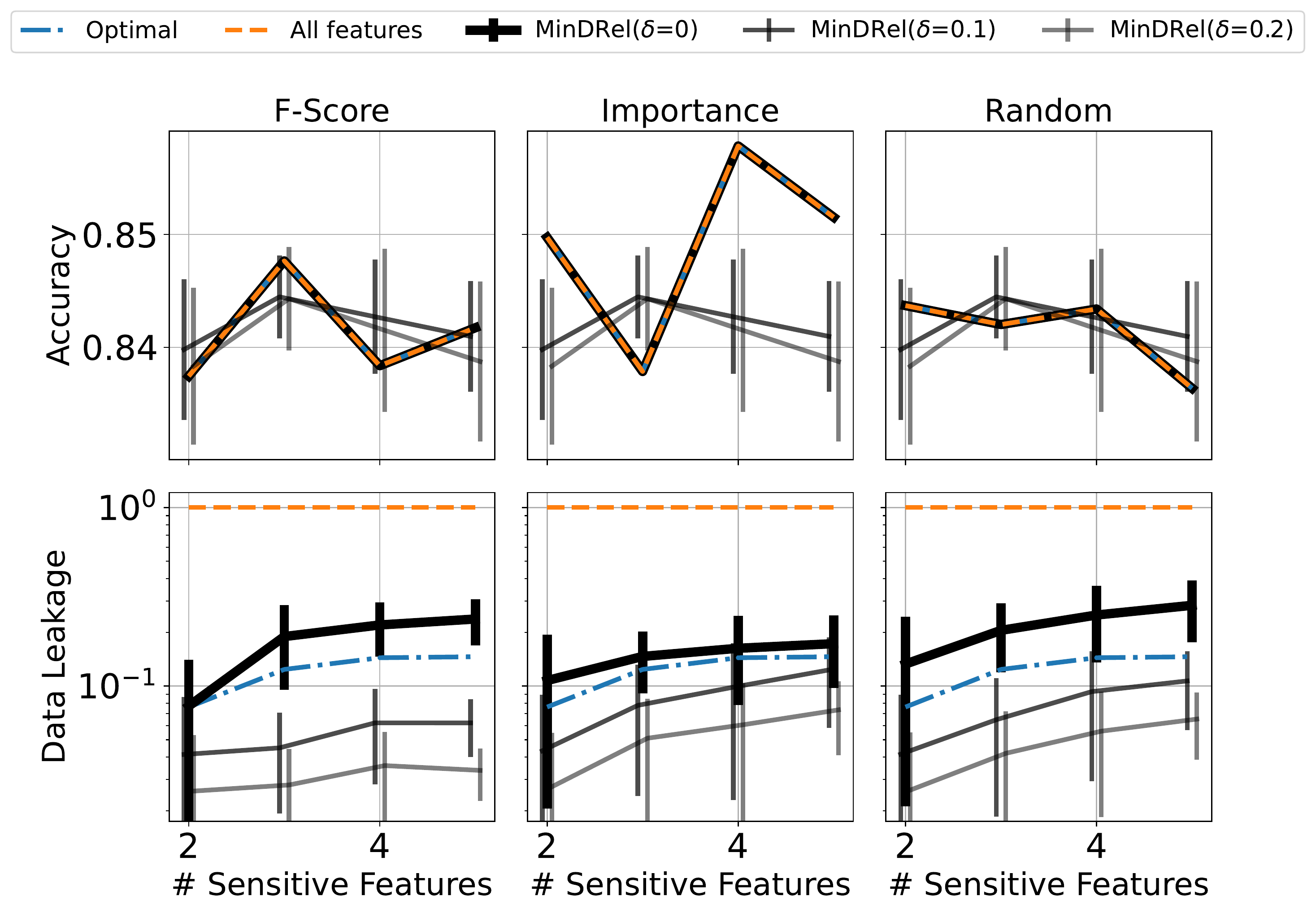}
\caption{Insurance dataset}
\end{subfigure}
\caption{Comparison between using (left) our proposed   F-Score (left) with Importance (Middle) and  Random (Right) for different choices of the number of sensitive features $|S|$.  The baseline classifier is Logistic Regression}
\label{fig:compare_linear_app}
\end{figure*}

\begin{figure*}[t]
\centering
\begin{subfigure}[b]{0.48\textwidth}
\includegraphics[width =1\linewidth]{v2_compare_nonlinear_bank.pdf}
\end{subfigure}
\begin{subfigure}[b]{0.48\textwidth}
\includegraphics[width = 1\linewidth]{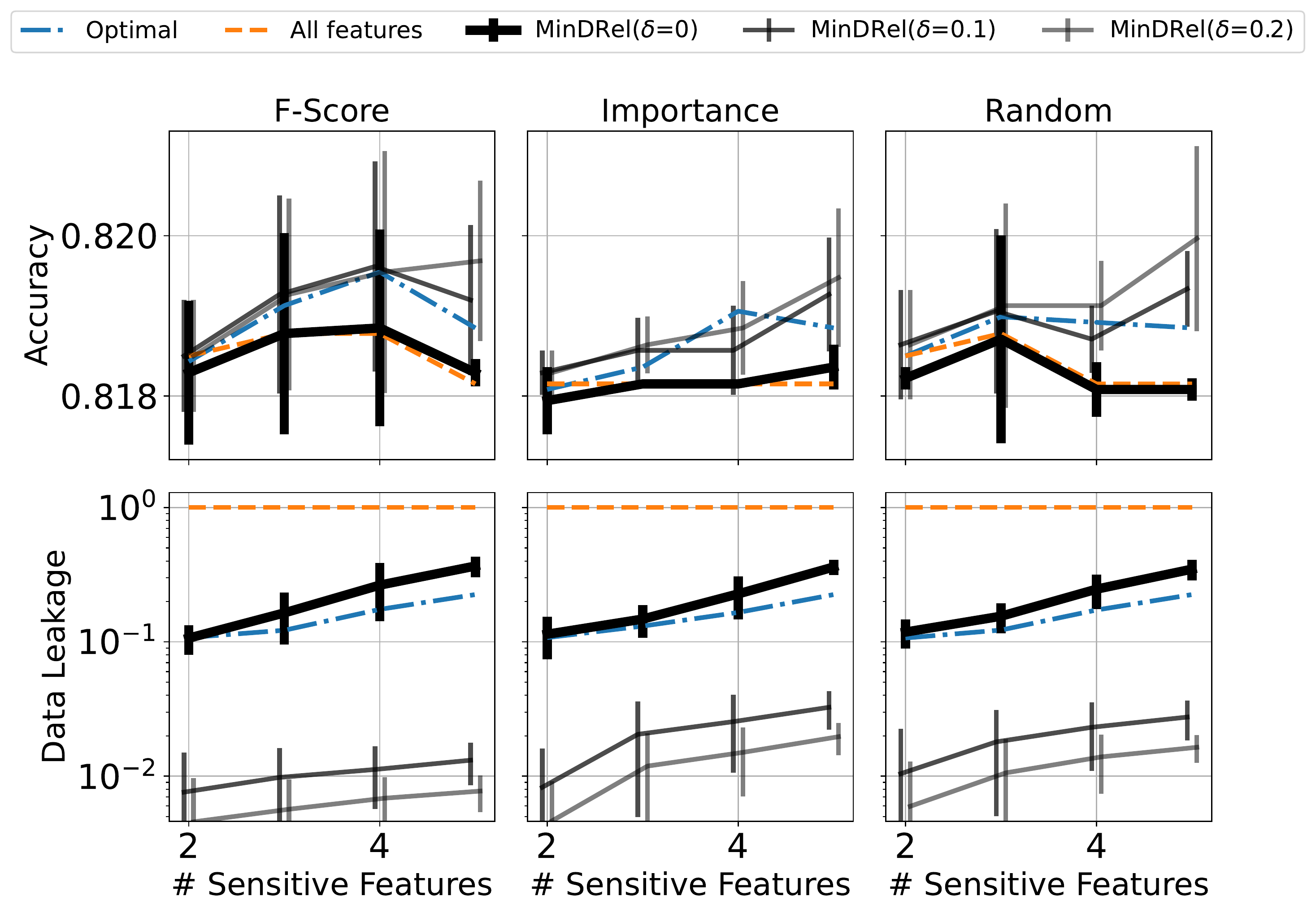}
\end{subfigure}
\begin{subfigure}[b]{0.48\textwidth}
\includegraphics[width =1\linewidth]{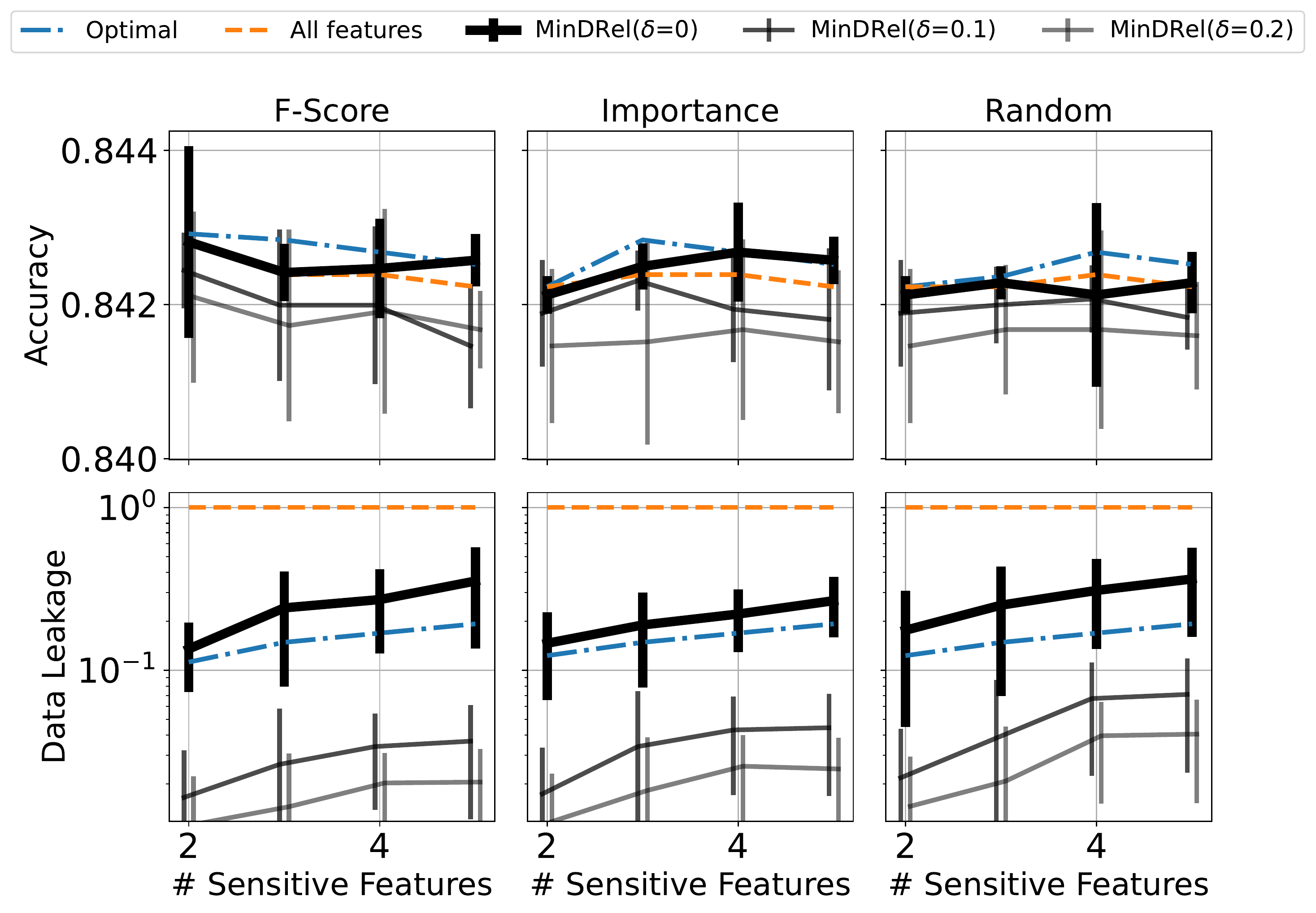}
\caption{Income dataset}
\end{subfigure}
\begin{subfigure}[b]{0.48\textwidth}
\includegraphics[width =1\linewidth]{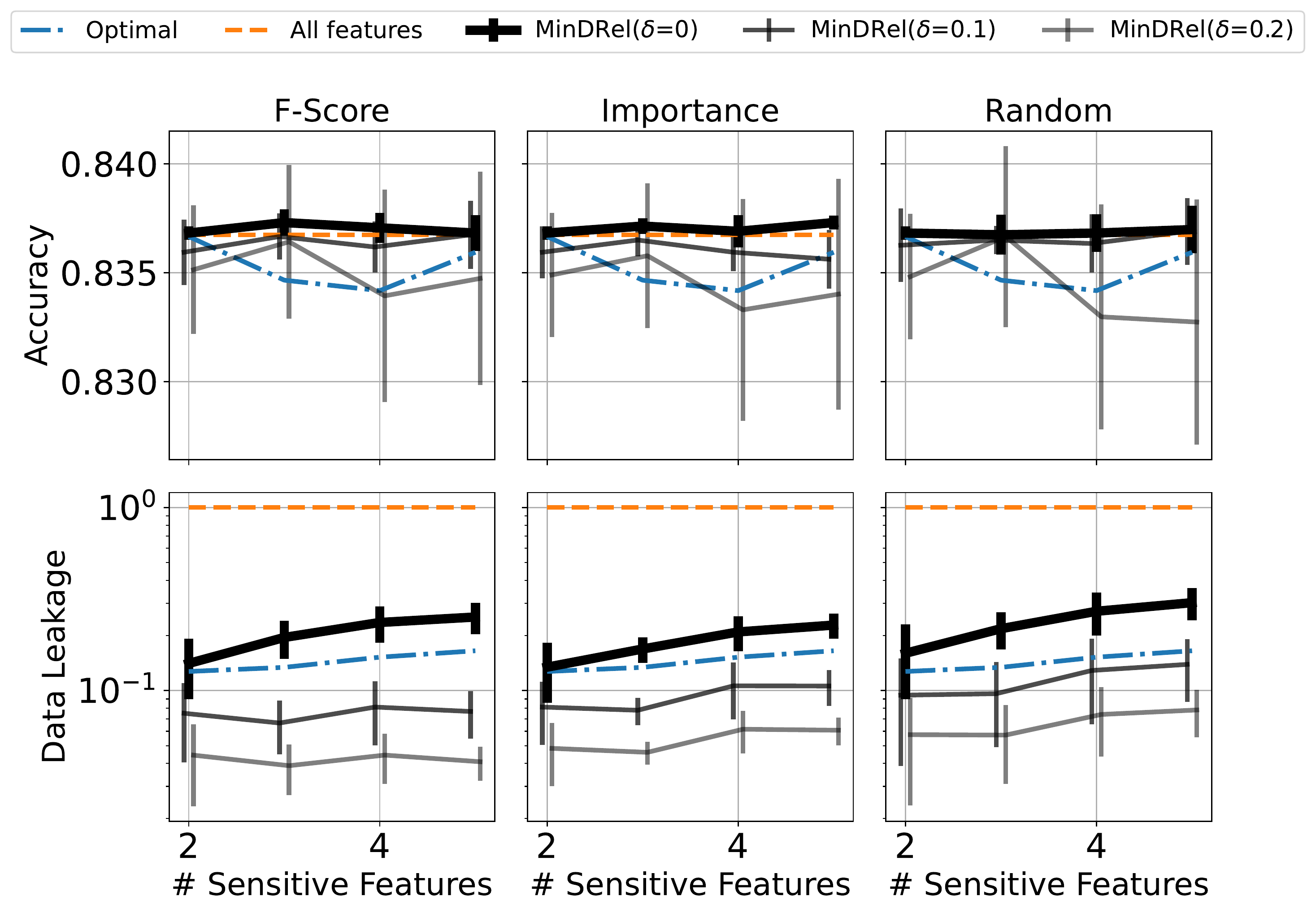}
\caption{Insurance dataset}
\end{subfigure}
\caption{Comparison between using (left) our proposed   F-Score (left) with Importance (Middle) and  Random (Right) for different choices of the number of sensitive features $|S|$. The baseline classifier is  a neural network classifier.}
\label{fig:compare_nonlinear_app}
\end{figure*}

\subsection{Additional comparison between using Gaussian assumption and Bayesian NN}

We first show empirically the benefits of 
 our proposed Gaussian assumption compared to using Bayesian NN which allows more flexibility in  modeling the conditional distribution $P(X_U|X_R =x_R)$. We report both training and inference time between Bayesian NN and our Gaussian assumption on various datasets when the number of sensitive features $|S| = 5$ in Table \ref{tab:1} and Table \ref{tab:2}. When $|S| = 5$ the number of possible subsets $U \in S$ is $2^5 = 32$ which requires training 32 Bayesian NN models. This will be especially slow for datasets with a large number of training samples (e.g., Income with 50K samples). In contrast, using Gaussian assumption we just need to precompute 32 inverse matrices $\Sigma^{-1}_{R,R}$ which is pretty fast for data that have a small number of features (less than 50 in our experiments). It is noted again that in this paper we focus on the case when the number of training samples is much more than the number of features. Likewise, during inference time, with Gaussian assumption, we can compute the distribution of model prediction in a closed form by simple matrix multiplication which takes $O(d^2)$. Instead, using Bayesian NN, it requires  expensive Monte Carlo sampling, especially when $|U|$ is large to obtain an accurate estimation of $P(X_U |X_R =x_R)$. 

 We also report the performance in terms of accuracy and data leakage between using Gaussian assumption and Bayesian NN in Figure \ref{fig:compare_linear_gauss_bayes}. We see no significant difference in terms of accuracy and data leakage between the two choices of modeling $P(X_U |X_R =x_R)$. In addition, as indicated above using the Gaussian assumption reduces significantly the training and inference time, in the subsequent experiments we will use the Gaussian assumption in MinRDel with F-Score selection.

\begin{table}
\centering
\begin{tabular}{c | c c c  c} 
 \toprule
 Method   &  Bank & Income & Credit  & Insurance \\
 \midrule
Bayesian NN & 204 &  375 & 125  & 90 \\ 
Gaussian assumption & 0.01 & 0.02 & 0.02 & 0.01\\
 \bottomrule
\end{tabular}
\caption{Comparison between using Bayesian neural network and our Gaussian assumption in terms of training time (minutes) when |S| = 5 for various datasets.}
\label{tab:1}
\end{table}

\begin{table}
\centering
\begin{tabular}{c|c c c  c}
 \toprule
 Method   &  Bank & Income & Credit  & Insurance \\ 
 \midrule
    Bayesian NN & 40 & 254  & 220 & 34 \\ 
    Gaussian assumption & 15 & 78 & 66 & 9 \\
 \bottomrule
 \end{tabular}
\caption{Comparison between using Bayesian neural network and our Gaussian assumption in terms of inference time (minutes) on the test set when $|S| = 5, \delta =0 $ for various datasets.}
\label{tab:2}
\end{table}

\begin{figure*}[t]
\centering
\begin{subfigure}[b]{0.48\textwidth}
\includegraphics[width = 1\linewidth]
{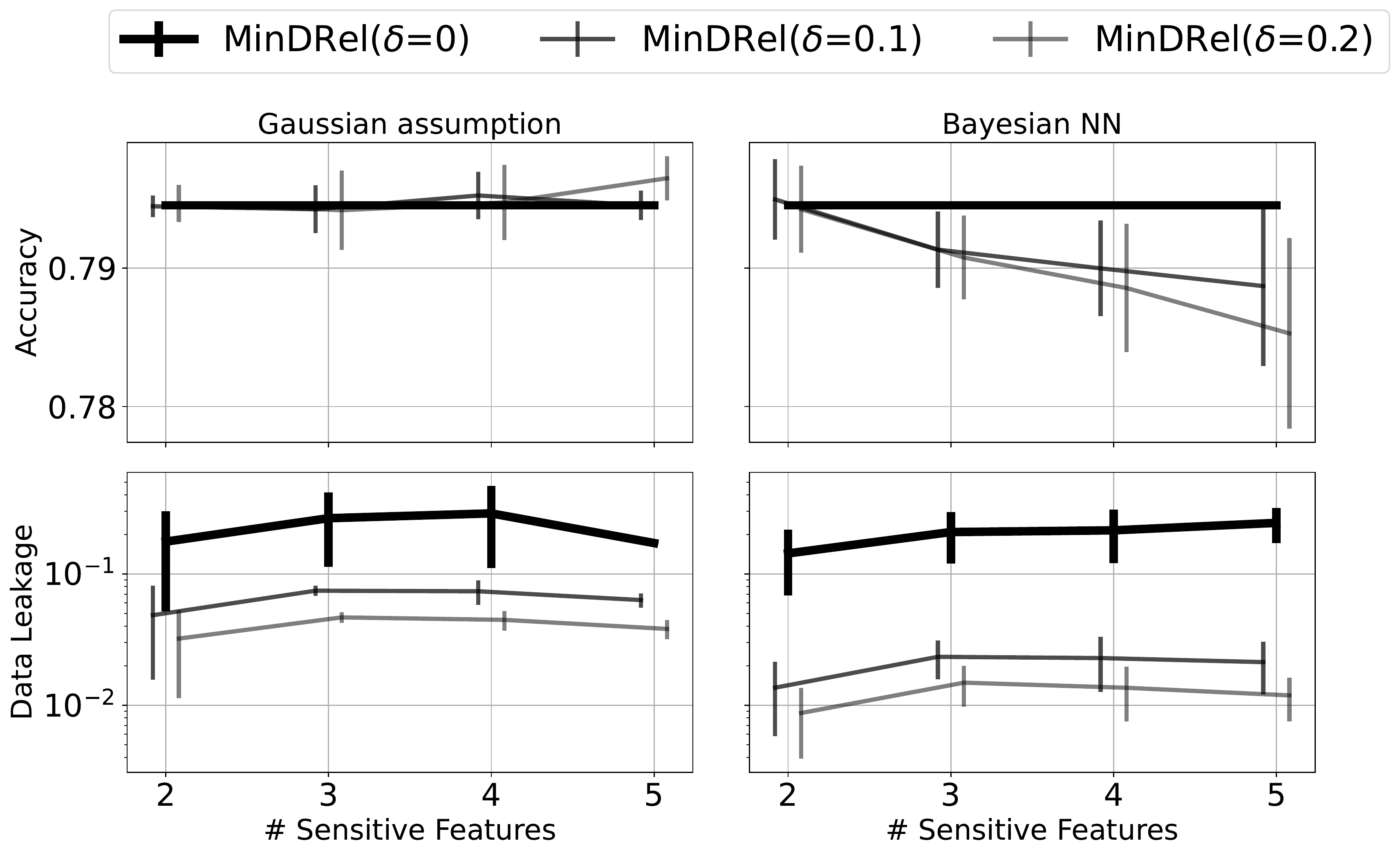}
\caption{Bank dataset}
\end{subfigure}
\begin{subfigure}[b]{0.48\textwidth}
\includegraphics[width = 1\linewidth]{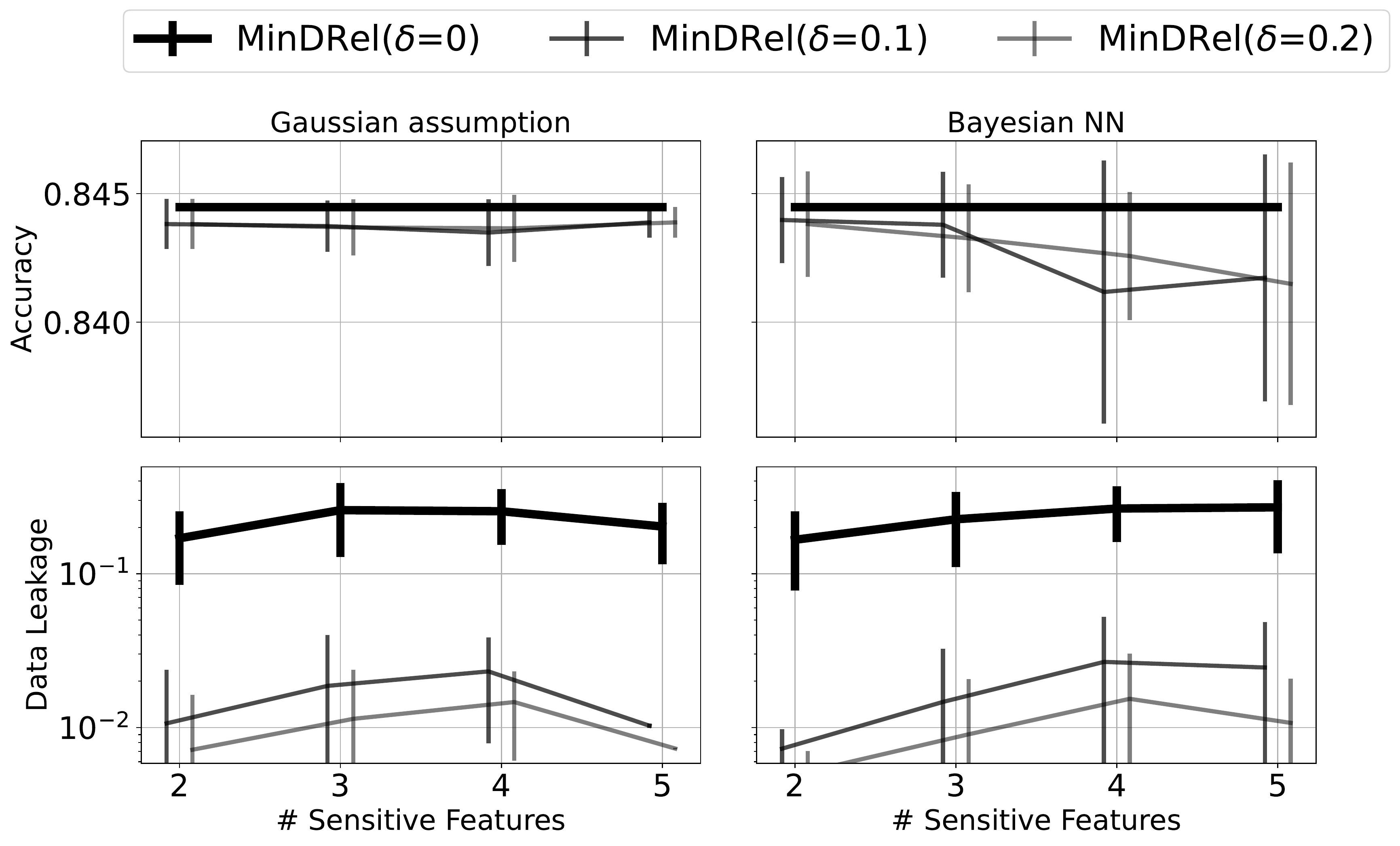}
\caption{Income dataset}
\end{subfigure}
\begin{subfigure}[b]{0.48\textwidth}
\includegraphics[width =1\linewidth]{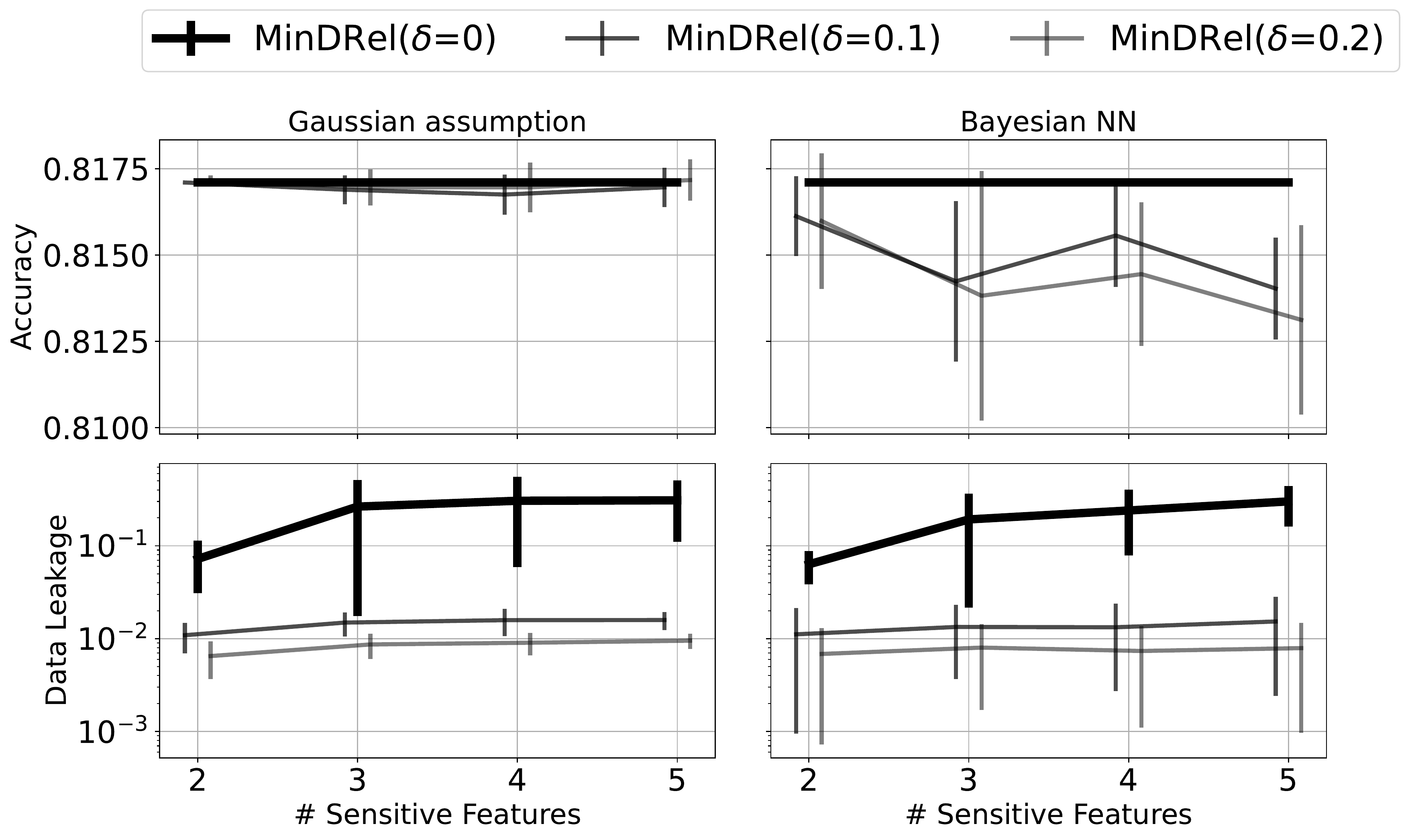}
\caption{Credit dataset}
\end{subfigure}
\begin{subfigure}[b]{0.48\textwidth}
\includegraphics[width =1\linewidth]{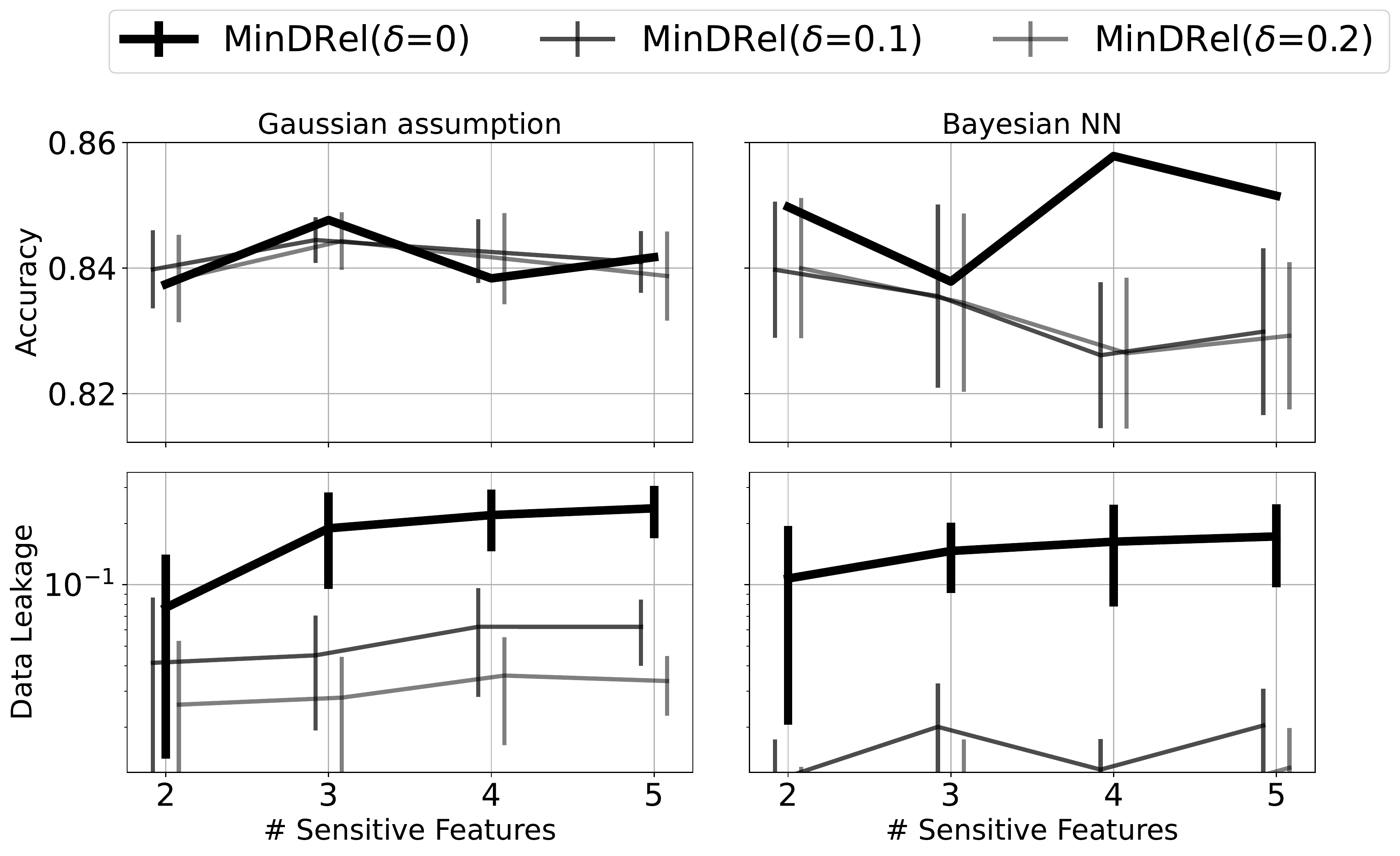}
\caption{Insurance dataset}
\end{subfigure}
\caption{Comparison between using Bayesian NN with our Gaussian assumption in terms of (1): accuracy and (2) data leakage for different choices of the number of sensitive features $|S|$ on different datasets using a Logistic Regression classifier.}
\label{fig:compare_linear_gauss_bayes}
\end{figure*}

\subsection{Additional experiments on linear binary classifiers}

Additional experiments were conducted to compare the performance of MinDRel to that of the baseline methods using linear classifiers on the Bank, Adult income, Credit, and Insurance datasets, as shown in Figure \ref{fig:compare_linear_app}. As in the main text, a consistent trend in terms of performance is observed. As the number of sensitive attributes, $|S|$, increases, the data leakage introduced by MinDRel with various values of $\delta$ increases at a slower rate. With different choices of $|S|$, MinDRel (with $\delta = 0$) requires the revelation of at most 50\% of sensitive information. To significantly reduce the data leakage of MinDRel, the value of $\delta$ can be relaxed. As mentioned in the main text, $\delta$ controls the trade-off between accuracy and data leakage here. The larger $\delta$ is, the greater uncertainty the model prediction has, which implies the fewer sensitive features users need to reveal and the lower accuracy of the model prediction. By choosing an appropriate value for the failure probability, such as $\delta = 0.1$, only minimal accuracy is sacrificed (at most 0.002\%), while the data leakage can be reduced to as low as 5\% of the total number of sensitive attributes.

\subsection{Additional experiments on non-linear binary classifiers}

Additional experiments were conducted to compare the performance of MinDRel to that of the baseline methods using non-linear classifiers on the Bank, Adult income, Credit, and Insurance datasets, as shown in Figure \ref{fig:compare_nonlinear_app}. As seen, while the baseline \textbf{All features} method requires the revelation of all sensitive attributes, MinDRel with different values of $\delta$ only requires the revelation of a much smaller number of sensitive attributes. The accuracy difference between the Baseline method and MinDRel is also minimal (at most 2\%). These results demonstrate the effectiveness of MinDRel in protecting privacy while maintaining a good prediction performance for test data.

\subsection{Sclability of MinDRel for large $|S|$}

We demonstrate the performance of MinDRel when  we have a large number of sensitive features $|S|$. Note that to reduce the runtime we did not run the \emph{Optimal} method which performs   an exponential search over all possible choices of the subset of $S$.

We first report the accuracy and data leakage of MinDRel when using F-Score or using either two heuristic rules Importance and Random in the case of logistic regression classifiers in Figure \ref{fig:compare_linear_large_s}.

\begin{figure*}[t]
\centering
\begin{subfigure}[b]{0.48\textwidth}
\includegraphics[width = 1\linewidth]
{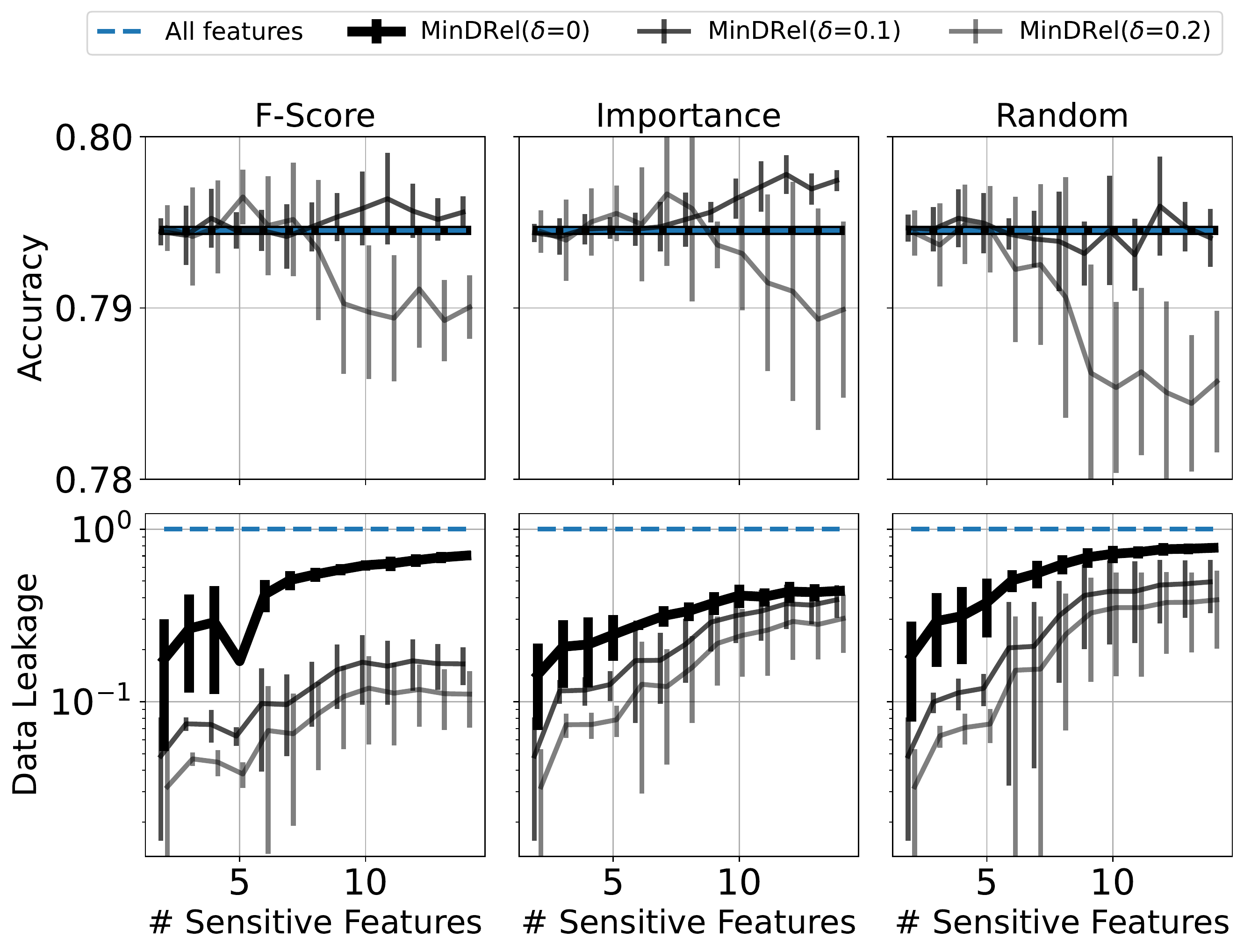}
\caption{Bank dataset}
\end{subfigure}
\begin{subfigure}[b]{0.48\textwidth}
\includegraphics[width = 1\linewidth]{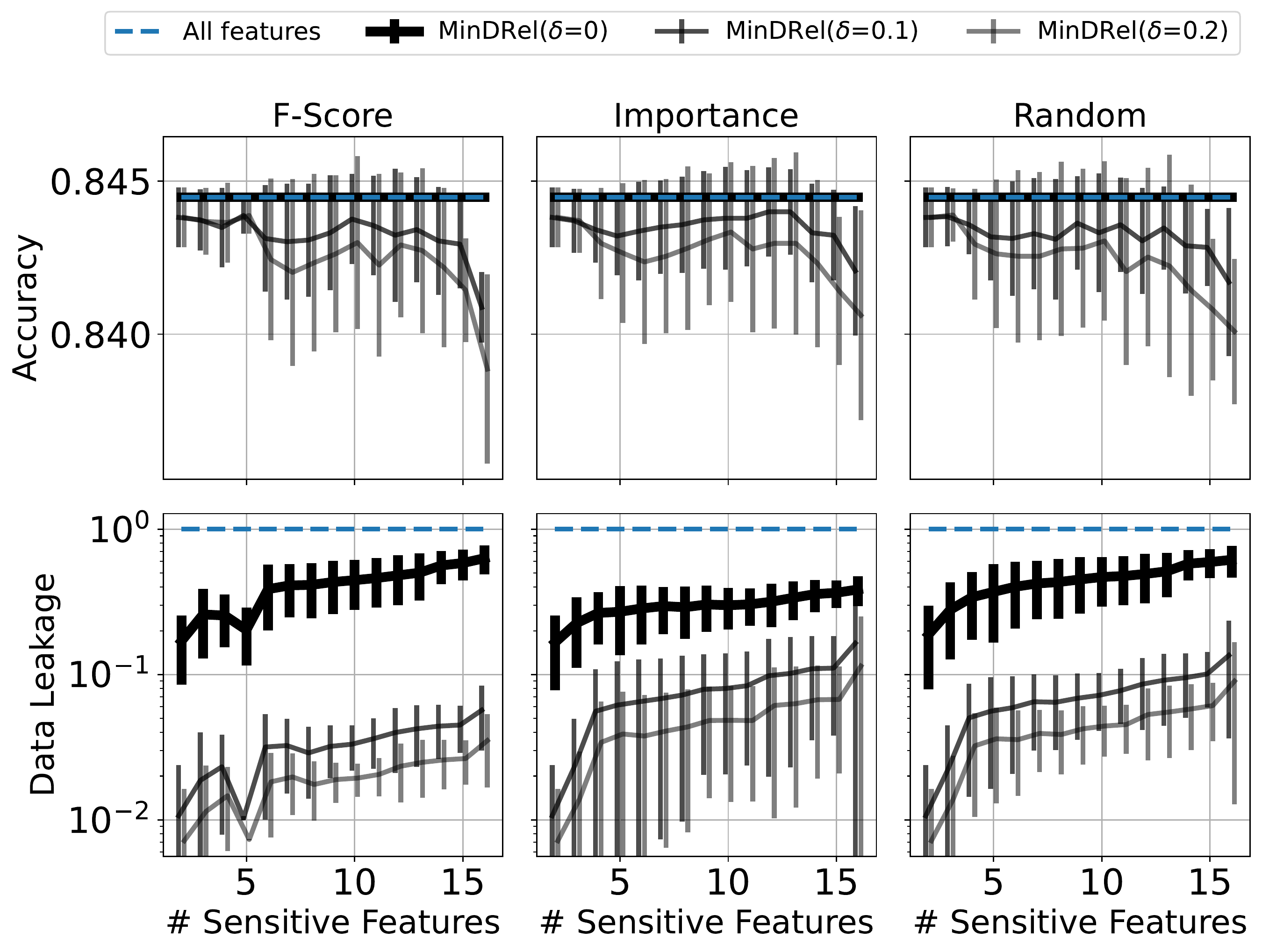}
\caption{Income dataset}
\end{subfigure}
\begin{subfigure}[b]{0.48\textwidth}
\includegraphics[width =1\linewidth]{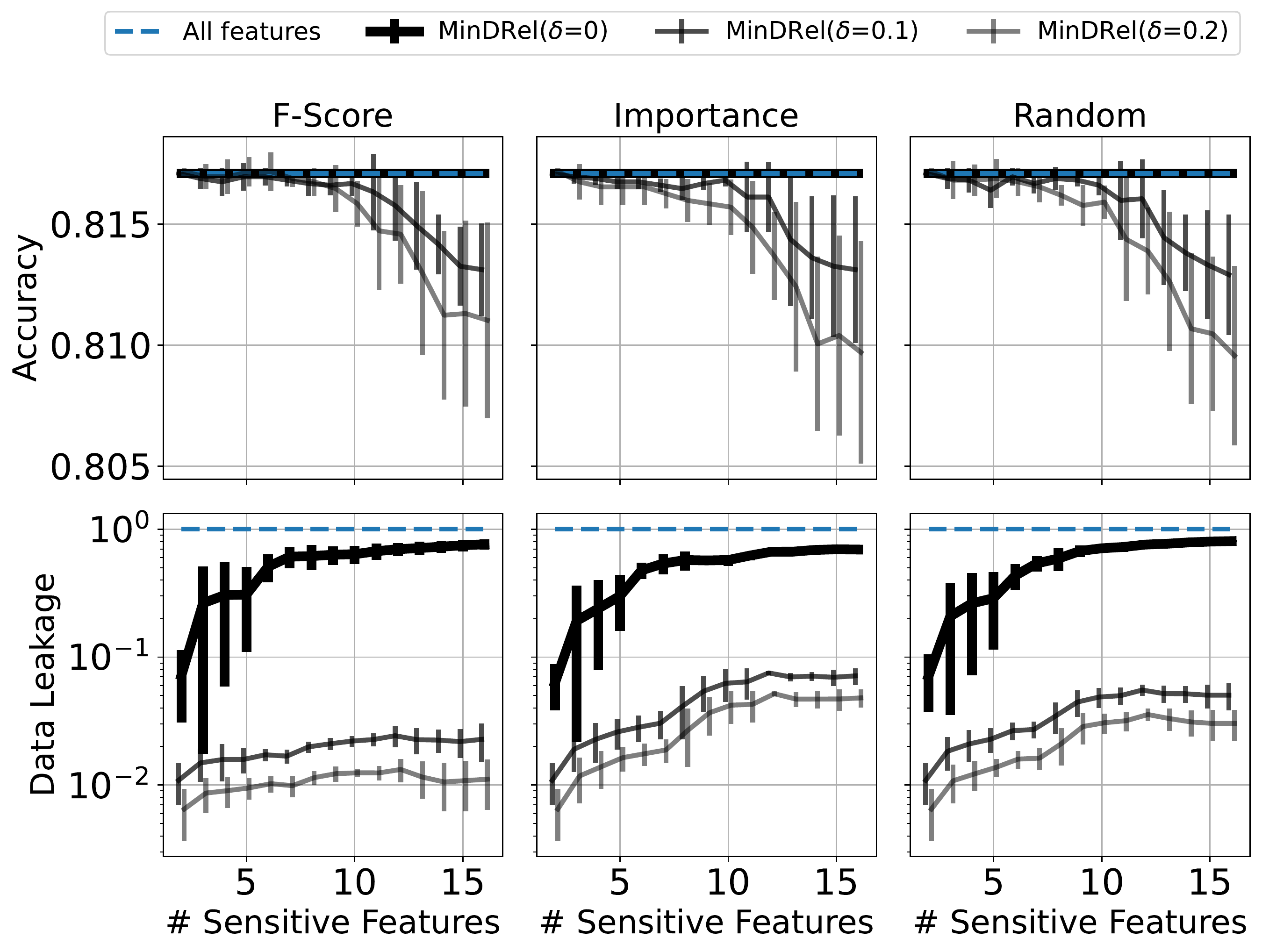}
\caption{Credit dataset}
\end{subfigure}
\begin{subfigure}[b]{0.48\textwidth}
\includegraphics[width =1\linewidth]{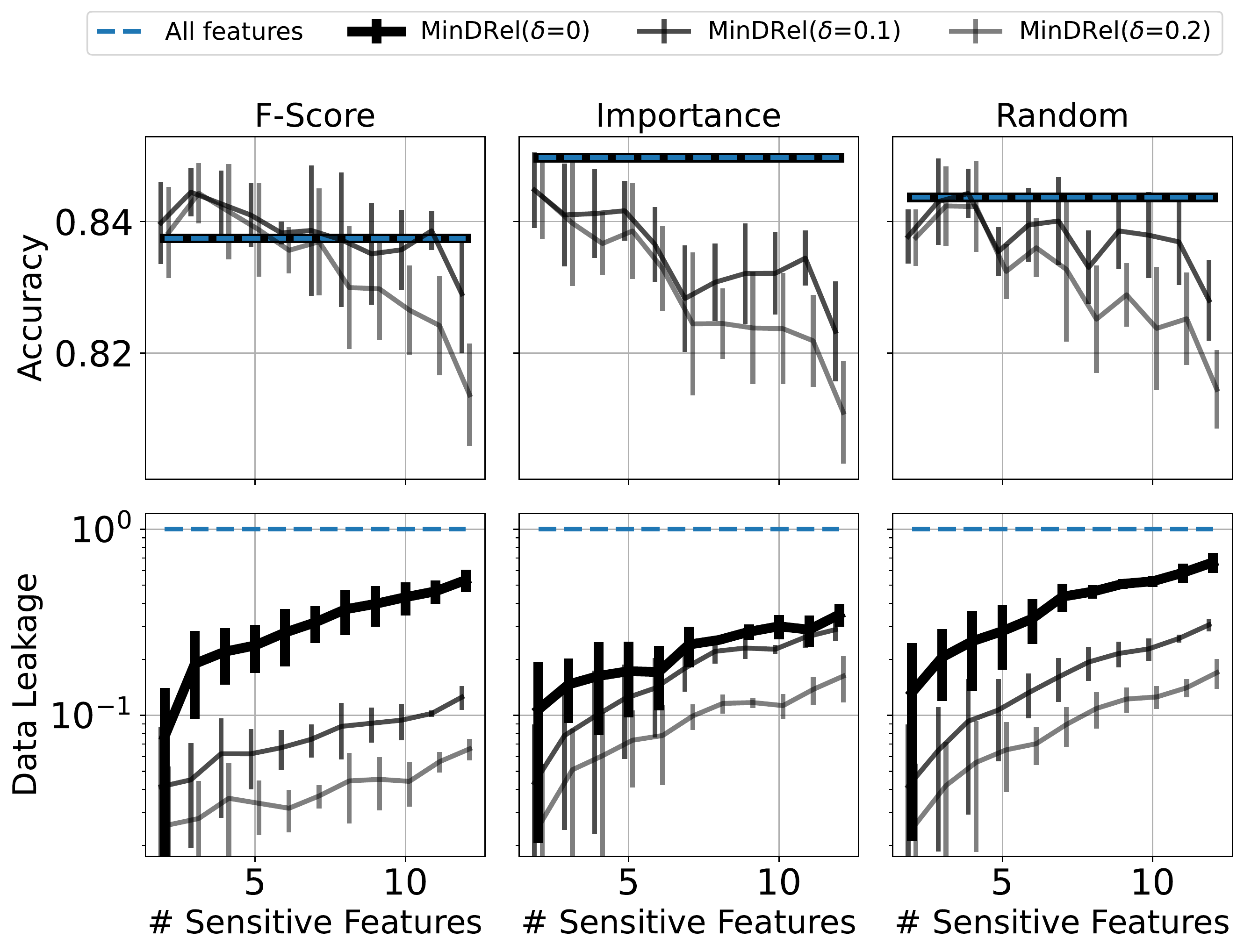}
\caption{Insurance dataset}
\end{subfigure}
\caption{Comparison between using (left) our proposed   F-Score (left) with Importance (Middle) and  Random (Right) for different choices of number of sensitive features $|S|$. The baseline classifier is  a logistic regression classifier.}
\label{fig:compare_linear_large_s}
\end{figure*}

Finally, we report the average  testing time (in seconds) to get the model prediction per user of MinDRel in Figure \ref{fig:time_compare}. It is noted that in this case, we assume the time taken by users to release sensitive features is negligible. It is evident that when $|S| >15$, our  proposed MinDRel with F-Score can take slightly more than 1 second to get the model prediction per user. This demonstrates the applicability of the models in practice.  

\begin{figure*}[t]
\centering
\includegraphics[width =1\linewidth]{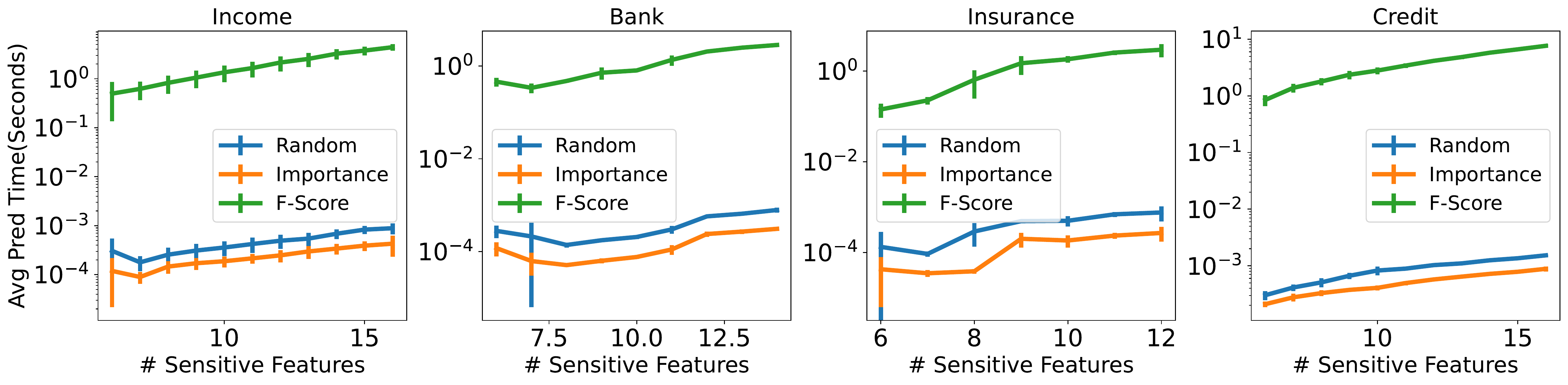}
\caption{Comparison in terms of average prediction time (seconds) among  F-Score, Importance and Random method in MinDRel ($\delta=0$) for different $|S|$. }
\label{fig:time_compare}
\end{figure*}

\subsection{Evaluation of MinDRel on multi-class classifiers}
\label{sec:app_multi_class_exp}
\paragraph{Linear classifiers}
We also provide a comparison of accuracy and data leakage between our proposed MinDRel and the baseline models for linear classifiers. These metrics are reported for the Customer and Children Fetal Health datasets in Figures \ref{fig:multi_class_La} and \ref{fig:multi_class_Lb}, respectively. The figures clearly show the benefits of MinDRel in reducing data leakage while maintaining a comparable accuracy to the baseline models.

\begin{figure*}[tb]
\captionsetup{justification=centering}
\centering
\begin{subfigure}[b]{0.48\textwidth}
\includegraphics[width = 1.0\linewidth]{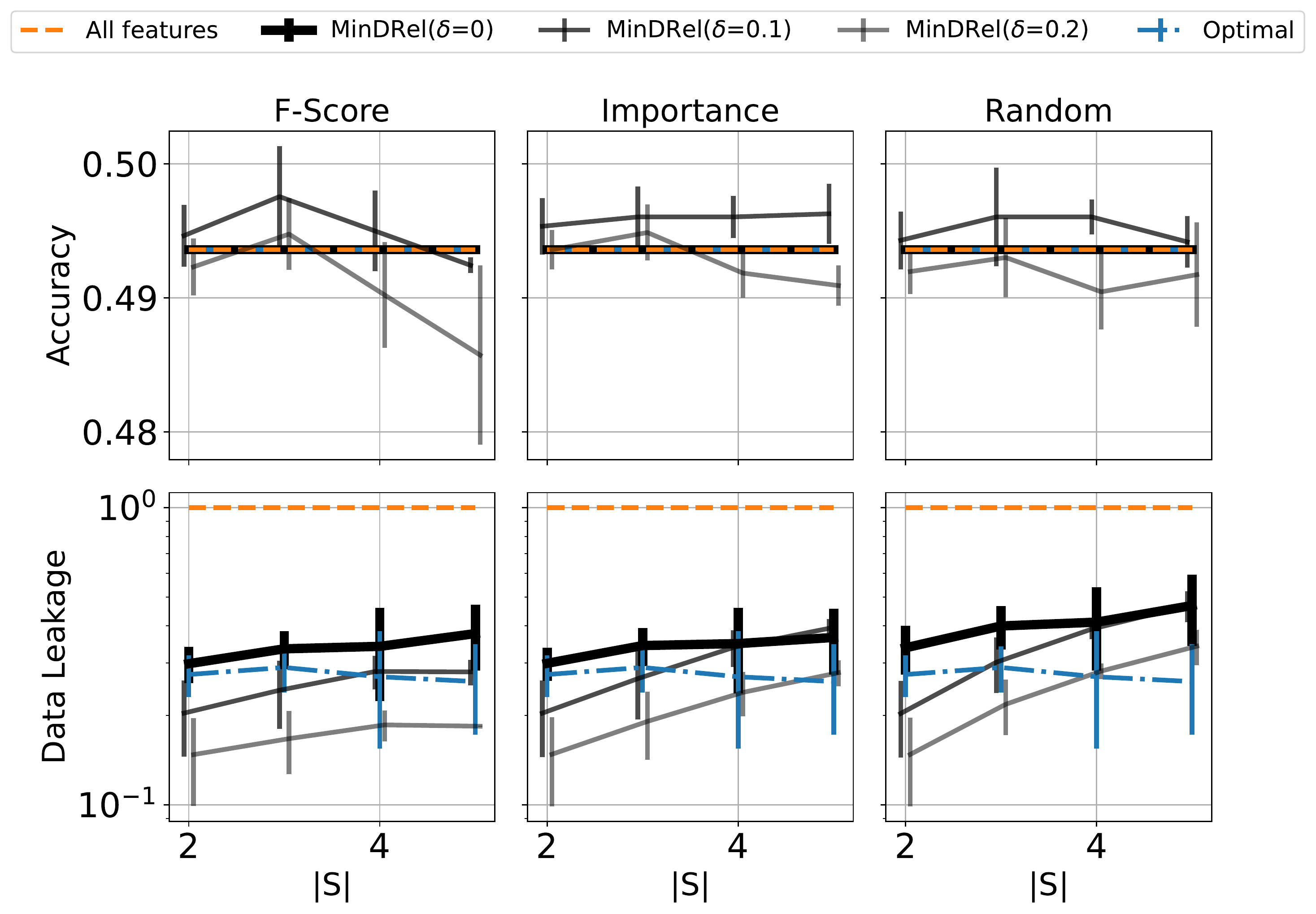}
\caption{Customer dataset}
\label{fig:multi_class_La}
\end{subfigure}
\begin{subfigure}[b]{0.48\textwidth}
\includegraphics[width = 1.0\linewidth]{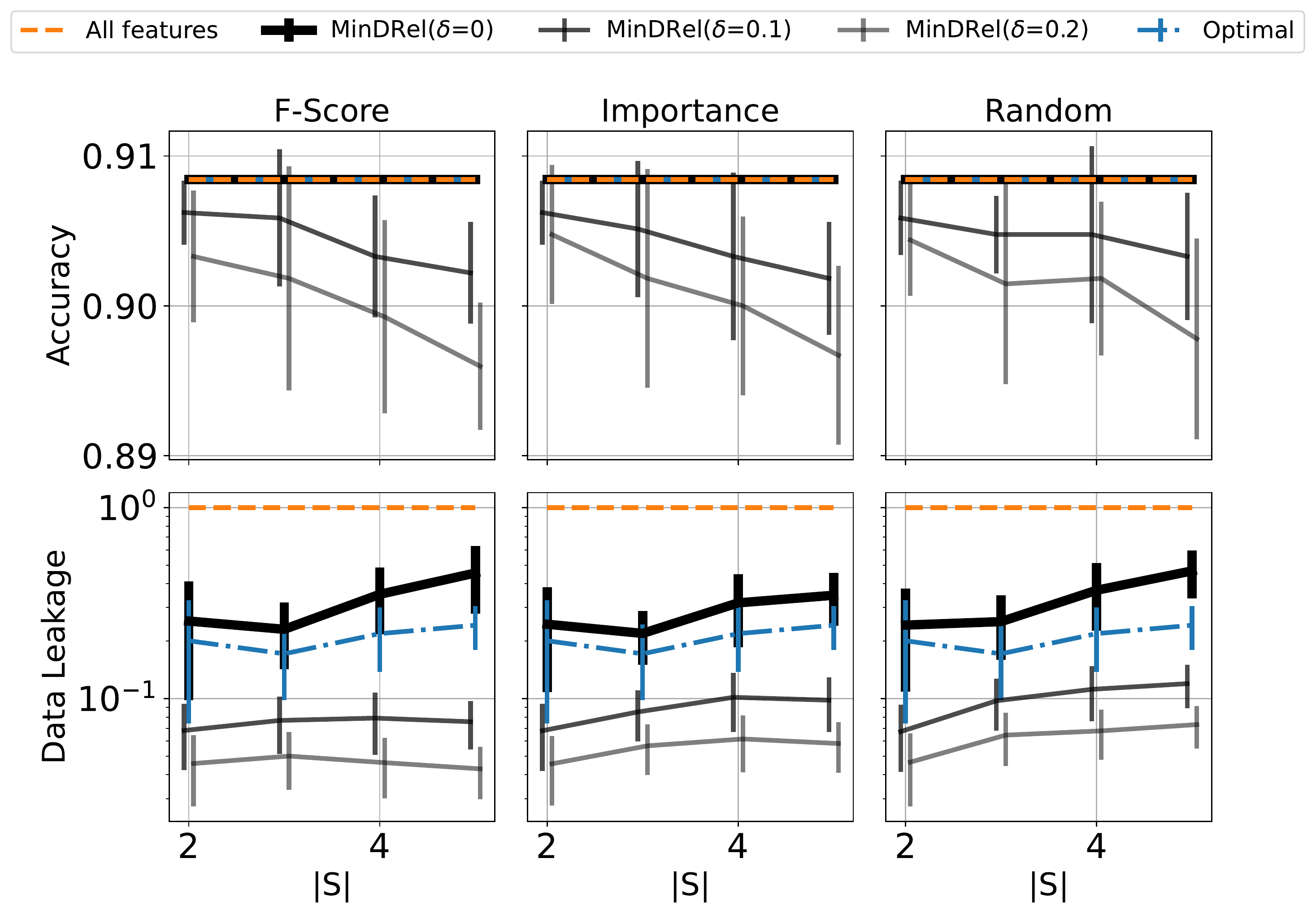}
\caption{Children fetal health dataset}
\label{fig:multi_class_Lb}
\end{subfigure}
\caption{Comparison between using our proposed   F-Score (left) with Importance (Middle) and  Random (Right) for different choices of the number of sensitive features $|S|$. The baseline classifier  is a multinomial Logistic Regression.}
\label{fig:multi_class_linear_v2}
\end{figure*}

\paragraph{Nonlinear classifiers}
Similarly, we present a comparison of our proposed algorithms with the baseline methods when using non-linear classifiers. These metrics are reported for the Customer and Children Fetal Health datasets in Figures \ref{fig:multi_class_a} and \ref{fig:multi_class_b}, respectively. The results show that using MinDRel with a value of $\delta=0$ results in a minimal decrease in accuracy, but significantly reduces the amount of data leakage compared to the Baseline method.

\begin{figure*}[tb]
\captionsetup{justification=centering}
\centering
\begin{subfigure}[b]{0.48\textwidth}
\includegraphics[width = 1.0\linewidth]{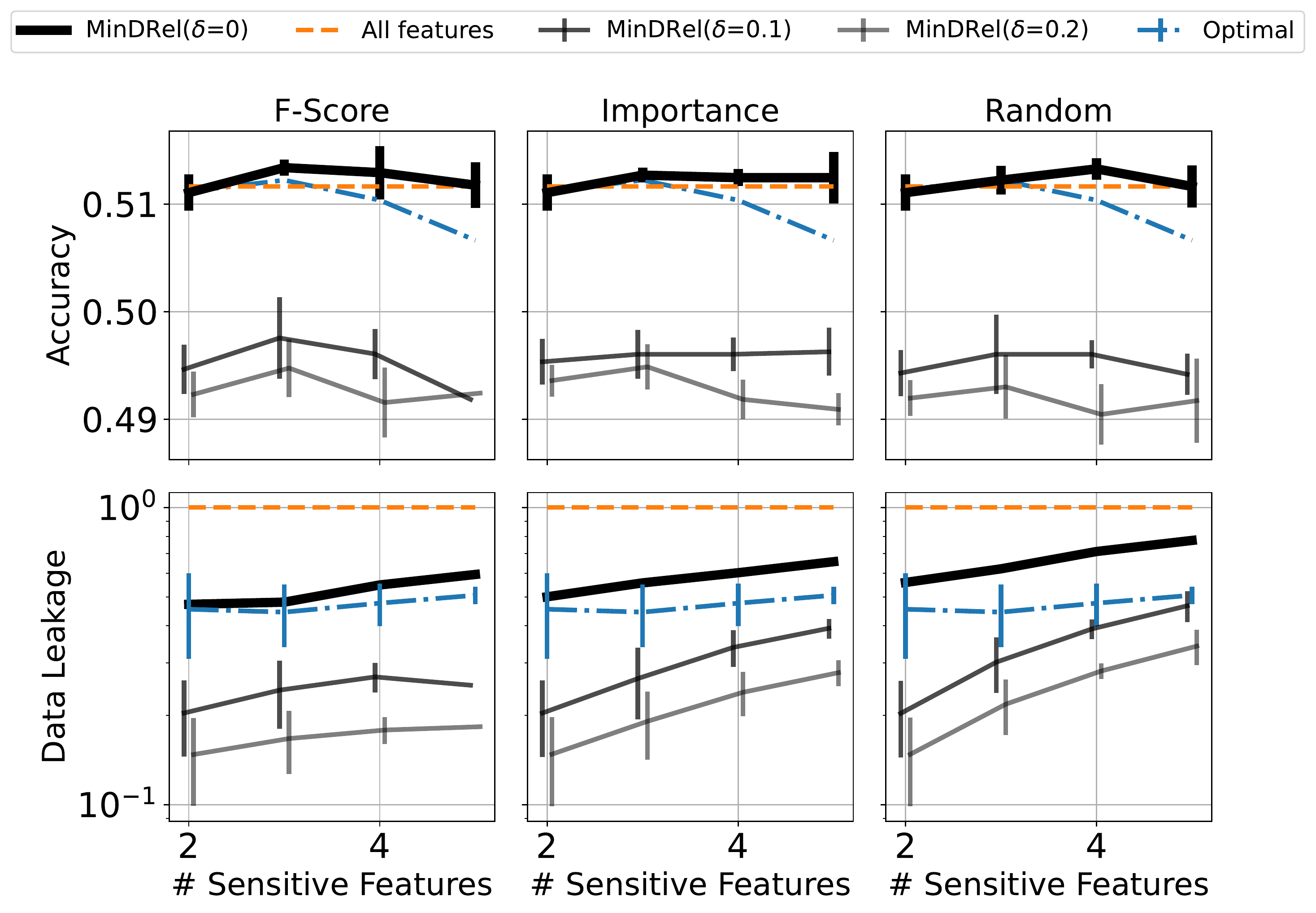}
\caption{Customer dataset}
\label{fig:multi_class_a}
\end{subfigure}
\begin{subfigure}[b]{0.48\textwidth}
\includegraphics[width = 1.0\linewidth]{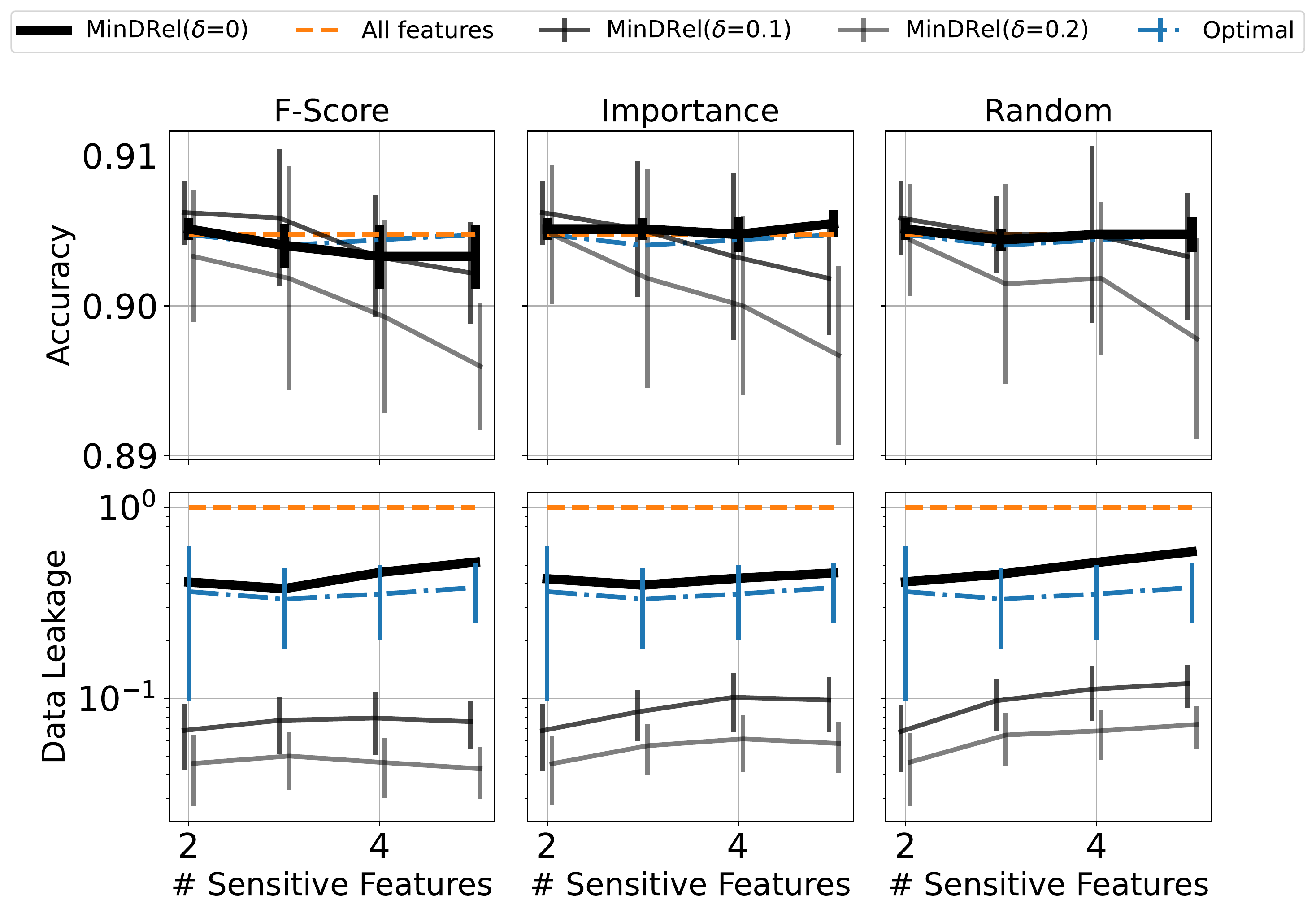}
\caption{Children fetal health dataset}
\label{fig:multi_class_b}
\end{subfigure}
\caption{Comparison between using our proposed   F-Score (left) with Importance (Middle) and  Random (Right) for different choices of the number of sensitive features $|S|$. The baseline classifier is  a neural network classifier.}
\label{fig:multi_class_nonlinear_v2}
\end{figure*}

\end{document}